\documentclass{article}

\usepackage{microtype}
\usepackage{graphicx}
\usepackage{subcaption}
\usepackage{booktabs} 

\usepackage{hyperref}

\usepackage[accepted]{icml2026}

\usepackage{amsmath}
\usepackage{amssymb}
\usepackage{mathtools}
\usepackage{amsthm}

\usepackage[capitalize,noabbrev]{cleveref}

\theoremstyle{plain}
\newtheorem{theorem}{Theorem}[section]

\newtheorem{corollary}[theorem]{Corollary}
\theoremstyle{definition}

\newtheorem{assumption}[theorem]{Assumption}
\theoremstyle{remark}

\usepackage[textsize=tiny]{todonotes}

\icmltitlerunning{Structure-Centric Graph Foundation Model via Geometric Bases}

\begin{document}

\twocolumn[
  \icmltitle{Structure-Centric Graph Foundation Model via Geometric Bases}



  \icmlsetsymbol{equal}{*}
  \icmlsetsymbol{corre}{$\dagger$}

  \begin{icmlauthorlist}
    \icmlauthor{Xiaodong He}{equal,yyy}
    \icmlauthor{Haolan He}{equal,yyy}
    \icmlauthor{Ruiyi Fang}{wu}
    \icmlauthor{Ming Sun}{yyy}
    \icmlauthor{Zhao Kang}{corre,yyy}
  \end{icmlauthorlist}

  \icmlaffiliation{yyy}{University of Electronic Science and Technology of China, Chengdu, Sichuan Province, China}
  \icmlaffiliation{wu}{Western University}

  \icmlcorrespondingauthor{Xiaodong He}{hexiaodong24@126.com}

  \icmlcorrespondingauthor{Zhao Kang}{zkang@uestc.edu.cn}
  

  \icmlkeywords{Machine Learning, ICML}

  \vskip 0.3in
]



\printAffiliationsAndNotice{
\icmlEqualContribution
\quad
{$\dagger$}Corresponding authors.
}  

\begin{abstract}
Graph foundation models (GFMs) seek transferable representations across graph domains but are limited by structural heterogeneity and incompatible node feature spaces.
We propose Structure-Centric Graph Foundation Models (SCGFM), which treat graph topology as the primary source of transferable knowledge.
Modeling graphs as metric measure spaces, SCGFM introduces learnable geometric bases that define a shared structural coordinate system.
Graphs are aligned to these bases via Gromov–Wasserstein distances, yielding structure-aligned latent representations that accommodate heterogeneous graph topologies.
To address feature incompatibility, SCGFM employs a structure-aware feature re-encoding mechanism that unifies node representations without assuming a fixed feature dimensionality or requiring dataset-specific preprocessing.
Experiments on graph- and node-level tasks demonstrate strong in-domain and cross-domain generalization, outperforming existing GFM approaches.
\end{abstract}

\section{Introduction}
\begin{figure}[ht]
    \centering
    \includegraphics[width=\linewidth]{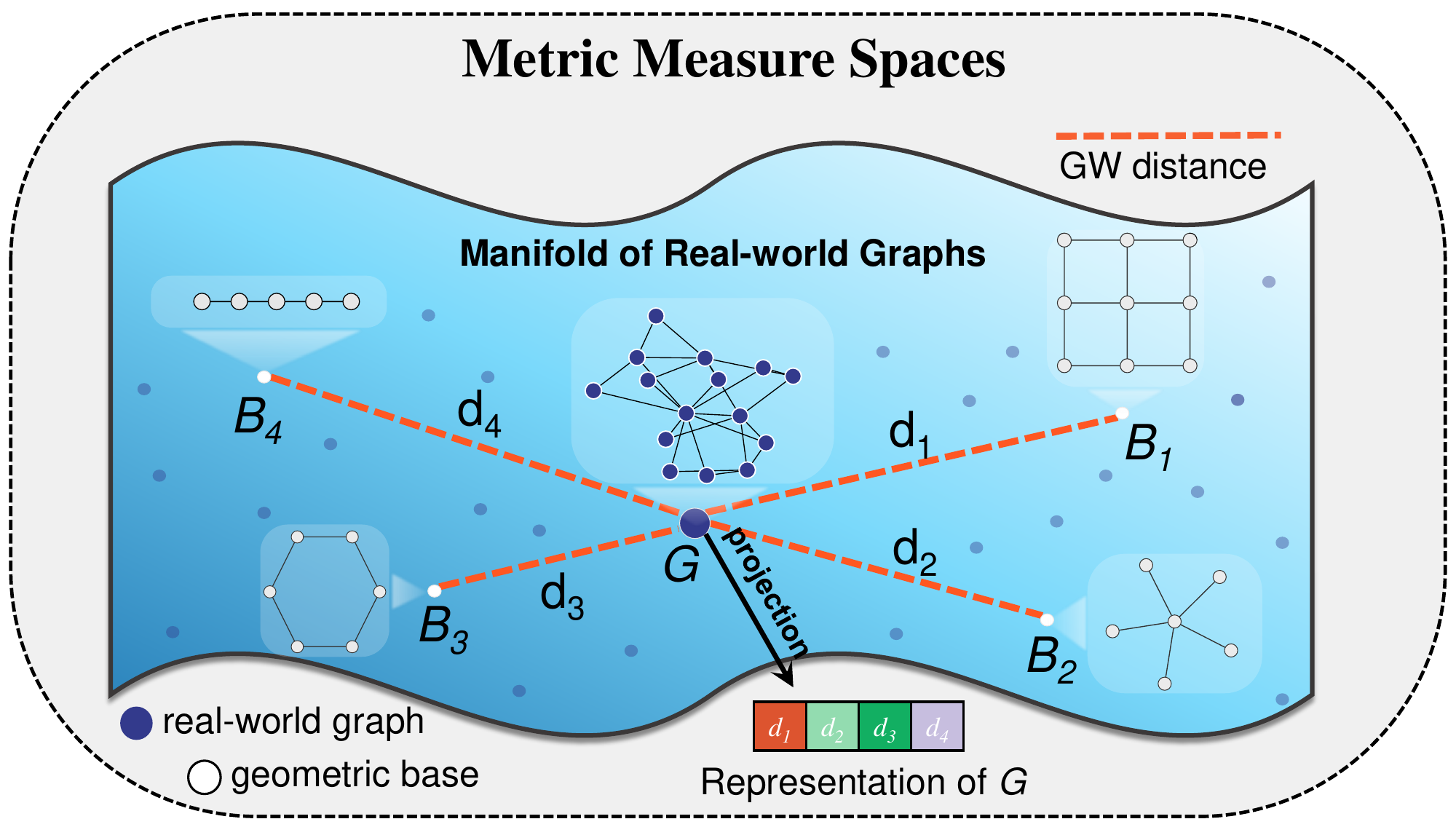}
    \caption{
     \textbf{Geometric perspective of SCGFM.}
  We view graph representation learning as a triangulation process in the space of metric measure spaces.
  Real-world graphs are assumed to lie on a low-dimensional manifold (blue region).
  Given an input graph $G$, its structural discrepancies to a shared set of geometric bases $\{B_k\}$ are measured using the Gromov--Wasserstein (GW) distance.
  These distances define a coordinate representation of $G$ as $\mathbf{q}=[d_1, d_2, \ldots]$, yielding a unified embedding that aligns heterogeneous graphs across domains.
    }
    \label{fig:cover_system}
\end{figure}

Foundation models have transformed natural language processing \cite{devlin-etal-2019-bert,NEURIPS2020_1457c0d6} and computer vision \cite{dosovitskiy2021image,he2022mae} through large-scale pretraining and strong cross-domain generalization.
These successes have motivated growing interest in Graph Foundation Models (GFMs), which aim to extend the same paradigm to graph-structured data.
Existing approaches largely follow two directions:
(1) augmenting graph models with large language models via prompting or adapters to inject linguistic priors \cite{ye-etal-2024-language,NEURIPS2023_622afc4e,zhao2023gimlet}, and
(2) pretraining graph neural networks on large graph corpora using contrastive or generative objectives \cite{wangmulti,yuan2025how}.
Despite promising progress, current GFMs still struggle to generalize reliably across graph datasets drawn from disparate domains and distributions.

A central challenge is the lack of a shared geometric reference for graphs \cite{liu2025gfm}.
Unlike text or images, graphs are relational objects defined only up to isomorphism and vary substantially across domains in topology, scale, and induced metrics.
Most existing GFM frameworks address this heterogeneity by enforcing a fixed node feature dimensionality \cite{Hu*2020Strategies,NEURIPS2022_5d4834a1}, typically via dataset-specific projections, padding, or dimensionality reduction, often implemented through feature adapters \cite{yuan2025how, shenwhen}.
While effective in practice, these strategies align feature spaces without aligning graph structure, causing learned representations to poorly reflect intrinsic graph similarity and limiting cross-domain transfer.

Recent work has explored graph tokenization \cite{pmlr-v267-chen25cf,ICLR2025_f2059277} as an alternative, discretizing graphs into symbolic tokens analogous to words in language models.
However, this paradigm is fundamentally misaligned with the geometric nature of graphs.
Graphs are non-Euclidean objects and permutation-invariant \cite{Bruna2013SpectralNA, fang2025benefits}, whereas tokenization typically imposes artificial orderings on inherently unordered structures \cite{pmlr-v80-you18a}.
As a result, token-based representations often violate permutation invariance and fail to capture intrinsic graph geometry, leading to limited robustness and generalization \cite{Bronstein2021GeometricDL,pmlr-v80-jin18a}.

In this work, we adopt a geometric perspective and model graphs as \textbf{metric measure spaces} (mm-spaces), in which structure is defined independently of node identities and feature semantics.
Motivated by results from metric geometry, we posit that real-world graphs lie within a structured and bounded subset of the space of mm-spaces.
Under this assumption, graphs can be represented by their Gromov–Wasserstein (GW) distances \cite{memoli2011gromov} to a finite set of canonical geometric patterns.
This distance-based projection induces a shared, continuous representation space that naturally accommodates heterogeneous graph structures (Figure~\ref{fig:cover_system}).

Building on this insight, we propose \textbf{Structure-Centric Graph Foundation Models (SCGFM)} based on learnable geometric bases.
Each base encodes a canonical graph structure with a fixed number of nodes and trainable edge weights, collectively forming a shared \textbf{structural coordinate system}.
Input graphs are aligned to this system via structure-preserving GW mappings, yielding unified latent representations that explicitly encode graph topology.
Unlike prototype or dictionary learning methods that operate primarily in feature space \cite{10.5555/3618408.3620116,pmlr-v139-vincent-cuaz21a}, our geometric bases are optimized to capture intrinsic structural variability across graphs.

Beyond structural heterogeneity, incompatibility among node feature spaces presents an additional  \cite{pmlr-v267-wang25an, NEURIPS2024_380a0b16, li2024pc}.
Rather than enforcing a fixed feature dimensionality, SCGFM leverages structural alignment to induce feature unification: node features are aggregated and re-encoded through their correspondence to geometric bases.
This produces unified feature representations anchored to structure, while remaining agnostic to the original feature dimensionality and semantics.

Together, these components yield a unified and scalable GFM that is both structure-centric and feature-flexible.
SCGFM\footnote{Code is available at: \url{https://github.com/Xd-He/SCGFM}} can be pretrained on graphs from diverse domains and transferred across datasets without architectural modification or dataset-specific preprocessing, offering a principled alternative to graph tokenization and domain-dependent GFMs.
Our contributions are summarized as follows:
\begin{itemize}
    \item We propose a structure-centric graph foundation model that enables transfer across heterogeneous graph datasets without dataset-specific architectures or feature preprocessing.
    \item We show that the learned geometric bases induce a geometry-aligned latent space whose distances correlate strongly with intrinsic Gromov–Wasserstein graph geometry.
    \item Extensive few-shot evaluations on graph- and node-level tasks demonstrate strong in-domain and cross-domain generalization.
\end{itemize}

\begin{figure*}[!t]
  \centering
  \includegraphics[width=0.9\linewidth]{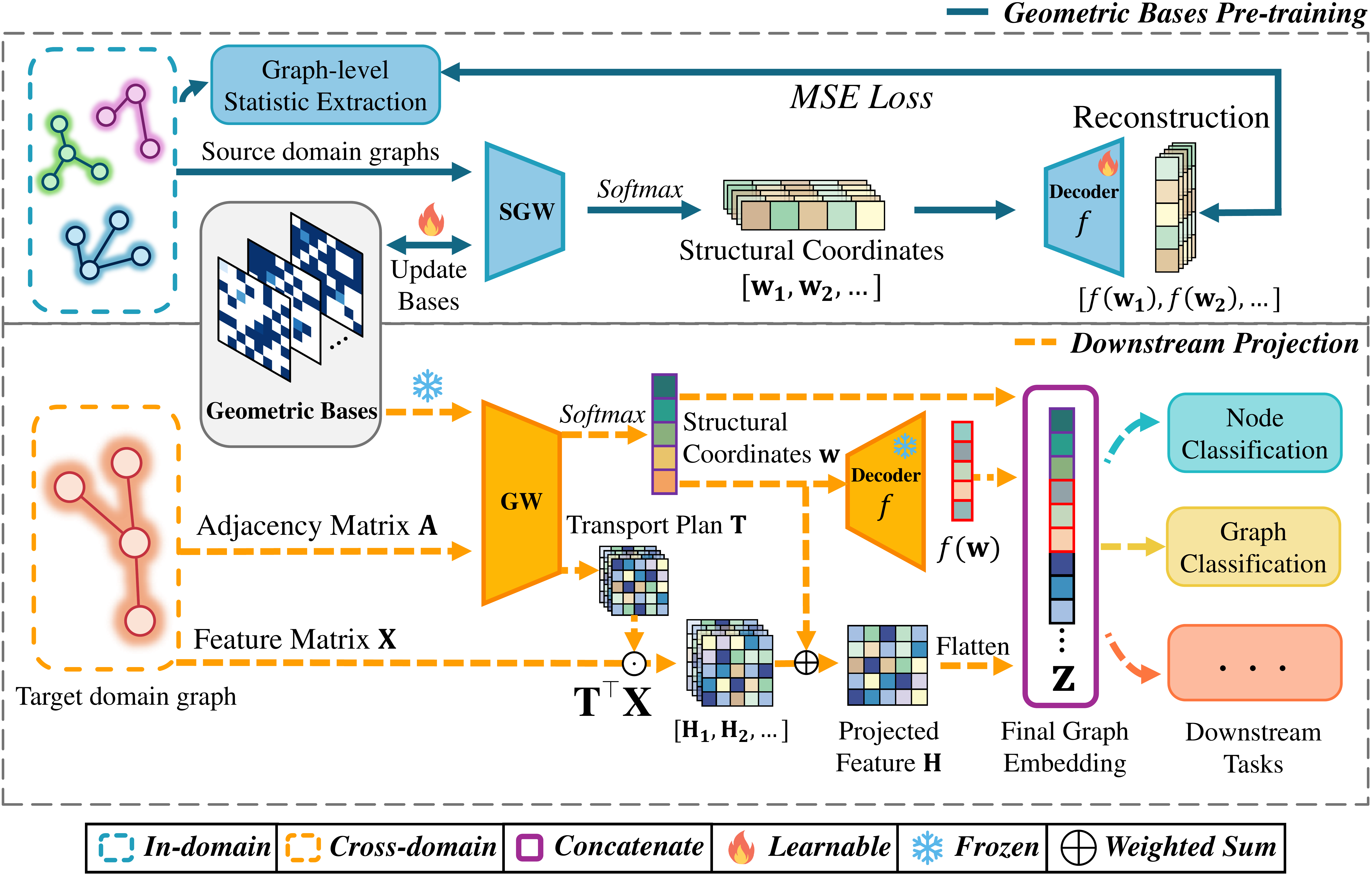}
  \caption{
      \textbf{Overall framework of SCGFM}.
      In \textbf{Geometric Bases Pre-training}, trainable geometric bases are optimized via the Sliced Gromov-Wasserstein (SGW) distance to map each source-domain graph to a structural coordinate.
      In \textbf{Downstream Projection}, a target graph is matched to the bases to obtain a structural coordinate vector and a GW transport plan for projecting the feature matrix.
      The final graph embedding is the concatenation of the structural coordinate $\mathbf{\textbf{w}}$, decoder output $f(\mathbf{\textbf{w}})$, and projected feature $\mathbf{\textbf{H}}$ for downstream tasks.
  }
    \label{fig:framework}
\end{figure*}

\section{Related Work}

Graph representation learning has evolved from self-supervised pre-training to GFMs designed for broad transferability across datasets and tasks \cite{liu2023graphssl,xie2023sslgnn,fang2026graph,liu2025gfm, pmlr-v202-pan23b}.
However, \emph{cross-domain few-shot} transfer remains challenging due to (i) substantial \emph{topological shifts} and (ii) \emph{heterogeneous} or missing node features, which undermine feature-centric approaches.
\subsection{Self-Supervised Graph Pre-training}
Early methods focused on contrastive views (e.g., DGI \cite{velickovic2018deep}, GraphCL \cite{you2020graph}), while later approaches minimized handcrafted augmentations (GraphACL \cite{xiao2023simple}) or adopted scalable masked modeling (GraphMAE2 \cite{hou2023graphmae2}, S2GAE \cite{tan2023s2gae}).

Recent works improve generalization via unified objectives (GALE \cite{pmlr-v267-wang25ez}) and structure-aware curriculum masking (CurMGAE \cite{pmlr-v267-li25ct} and SAGA \cite{fangsaga}).
While effective in-domain, these methods typically assume compatible feature spaces within a shared embedding domain. Consequently, they often struggle with topology-driven shifts or inconsistent features under few-shot settings, limiting cross-dataset transferability.

\subsection{GFMs Across Domains and Tasks}
GFMs unify learning via shared interfaces and large-scale pre-training \cite{liu2025gfm,wang2025graph}.
Interface-centric designs align tasks using language or structured prompts (OFA \cite{liuone}, GIT \cite{wangtowards}, UniGraph \cite{10.1145/3690624.3709277}, SIGOOD \cite{wang2026subtle}). 
Recent research emphasizes cross-domain robustness: MDGFM \cite{wangmulti} and SAMGPT \cite{Yu2025SAMGPTTG} focus on topology and structure alignment, while BRIDGE \cite{yuan2025how} explores selective knowledge assembly.
Retrieval-augmented models like RAG4GFM \cite{NEURIPS2025_0c3ce12a} and RAG-GFM \cite{yuan2026rag} further enhance inference reliability.
However, most GFMs still rely on feature alignment or prompts that fail when target graphs exhibit strong topological shifts and unreliable features.

\subsection{GW-based graph methods.}
Several recent works have applied optimal transport to graph-structured data. Vayer et al.~\cite{pmlr-v97-titouan19a} introduced Fused Gromov--Wasserstein (FGW) to jointly compare node features and graph structures, while Chen et al.~\cite{pmlr-v202-chen23ak} employed a GW geometric perspective for spectrum-preserving graph coarsening. These studies highlight the usefulness of GW-type distances for graph comparison and representation learning. In contrast, SCGFM learns a set of geometric bases that define a shared structural coordinate system.

\section{The Proposed Method}

We propose \textbf{SCGFM} that learns a finite set of geometric bases to represent arbitrary graphs within a unified metric latent space.
It consists of two stages: (1) \textbf{Geometric Bases Pre-training}, and (2) \textbf{Downstream Projection}.
An overview of the framework is shown in Figure~\ref{fig:framework}.

\subsection{Geometric Motivation: Finite Covering of Graph Space}
We regard the set of all finite graphs, endowed with normalized metrics and measures, inducing elements in a common mm-space, and denote by $\mathcal{X}$ the resulting space equipped with the GW distance $d_{GW}$.
Under this view, graph representation learning amounts to learning over the space $(\mathcal{X}, d_{GW})$, where each graph is represented by its relational geometry rather than explicit node correspondences.

\begin{assumption}[\textbf{Total Boundedness}]
  We assume real-world graphs lie in a totally bounded subset $\mathcal K \subset \mathcal X$. 
  By Gromov's compactness theorem for mm-spaces \cite{Gromov1999Metric}:
  \begin{itemize}
    \item Total boundedness is equivalent to the existence of finite $\epsilon-$covers under the GW metric.
    \item Thus, for any tolerance $\epsilon >0$, there exist prototypes $\left\{C_1,\ldots,C_K \right\}$ such that each graph $G \in \mathcal{K}$ satisfies
    \begin{equation}
      d_{GW}(G,C_k) < \epsilon \quad \text{for some }k.
    \end{equation}
  \end{itemize}
  \label{ass:boundedness}
\end{assumption}
To instantiate this theoretical guarantee, we propose learning a discrete set of geometric bases that approximates the optimal $\epsilon$-cover directly from data.
Unlike task-specific templates, these learnable prototypes induce a shared geometric coordinate system, in which any graph can be represented through its transport discrepancies to the bases. We next specify the parameterization of these bases and show how it ensures that each prototype constitutes a valid mm-space.

\subsection{Geometric Bases Pre-training}
Each geometric base is defined as a finite mm-space:
\begin{equation}
  B_k = ([M],d_k,\mu_k),
\end{equation}
where the metric component $d_k$ is represented by a learnable matrix
$\mathbf{B}_k \in [0,1]^{M \times M}$.
$\mathbf{B}_k$ is symmetric, hollow, and bounded, and serves as the concrete parameterization of the geometric base.
Following common practice in GW-based learning, we do not enforce the triangle inequality, as pseudo-metrics remain valid distance kernels.
The measure $\mu_k$ is fixed to a uniform distribution over the $M$ points, which avoids introducing unnecessary degrees of freedom and ensures well-defined GW couplings.

\textbf{Graph as a Metric Measure Space.}
We denote each input graph as $G=(\mathcal{V}, \mathcal{E}, \mathbf{A}, \mathbf{X})$, with node set $\mathcal{V}$, edge set $\mathcal{E}$, adjacency matrix $\mathbf{A}$, and node features $\mathbf{X}\in\mathbb{R}^{N\times F}$ ($N=|\mathcal{V}|$, $F$ is the feature dimension). We then convert $G$ into a mm-space:
\begin{equation}
  \mathcal{G} = (\mathcal{V},d_G,\mu_G),
\end{equation}
where $d_G$ creates the structure of graph, represented here by $\mathbf{A}$. 
$\mu_G$ is a \textbf{degree-based measure}:
\begin{equation} \label{eq:mu_degree}
  \mu_G(i) = \frac{\text{deg}(i)}{\sum_{v \in V}\text{deg}(v)}.
\end{equation}
where $\text{deg}(i) $ is the degree of i-th node.

\textbf{GW-based Structural Alignment.}
For each graph $G_i$, we compute its GW discrepancy with each base:
\begin{equation} \label{eq:gw_distance}
  \delta_k = d_{GW}(\mathbf{A}_i, \mathbf{B}_k),
\end{equation}
approximated using \textbf{Sliced Gromov-Wasserstein (SGW)} \cite{vayer2019sliced}. 
Crucially, by projecting the metric structure onto 1D slices, SGW reduces the computational bottleneck from the prohibitive cubic complexity $\mathcal{O}(N^3)$ of exact solvers to a \textbf{quasi-linear $\mathcal{O}(N \log N)$} cost.
This efficiency is fundamental to our method's scalability.

\textbf{Linear Surrogate of GW Barycenter.}
Computing the true GW barycenter
\begin{equation}
  \text{arg }\underset{B}{\operatorname{min }} \mathbb{ E}_{\mathcal{G}} \left[ d_{GW}(\mathcal{G}, B)\right],
\end{equation} 
is intractable because barycenter computation requires nested Optimal Transport (OT) optimization \cite{peyre2016gromov}. 
We thus adopt a \textbf{linear surrogate model}:
\begin{equation} \label{eq:surrogate_B}
  \widetilde{\mathbf{B}}(G) = \sum_{k=1}^{K} w_k \mathbf{B}_k,
\end{equation}
where $K$ denotes the number of geometric bases. The weight vector $\mathbf{w}$, which serves as the \textbf{structural coordinates} of $G$, is computed based on structural similarity:
\begin{equation} \label{eq:weights}
  w_k = \frac{\text{exp}(-\delta_k/ \tau)}{\sum_j \text{exp}(-\delta_j /\tau)}. 
\end{equation}
where $\tau$ is a temperature hyperparameter.

\begin{theorem}[\textbf{Stability of Structural Coordinates}] \label{the:stability}
  Let $G$ and $G'$ be two graphs with bounded GW distance $d_{GW}(\mathcal{G},\mathcal{G}') \leq \eta,0 \leq \eta < \infty$.
  The structural coordinates $\mathbf{w}$ and $\mathbf{w}'$ computed via Eq.~\ref{eq:weights} is Lipschitz continuous with respect to the GW distance:
  \begin{equation}
    \Vert \mathbf{w} - \mathbf{w}' \Vert_2 \leq L_w \cdot \eta,
  \end{equation} 
\end{theorem}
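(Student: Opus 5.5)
The proof composes two Lipschitz maps: the map from a graph to its discrepancy vector $\boldsymbol{\delta}=(\delta_1,\ldots,\delta_K)$, and the temperature-scaled softmax of Eq.~\ref{eq:weights}. We treat $\delta_k$ as the exact GW distance $d_{GW}(\mathcal{G},B_k)$ between mm-spaces, not the SGW approximation. For SGW we would need a separate Lipschitz property of the sliced discrepancy, and we note this as a caveat rather than prove it.

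\textbf{Step 1 (discrepancy vector is 1-Lipschitz per coordinate).} $d_{GW}$ is a pseudo-metric on $\mathcal{X}$; it satisfies the triangle inequality on mm-spaces by the standard gluing argument of \cite{memoli2011gromov}. Hence for each base $B_k$,
\begin{equation*}
|\delta_k-\delta_k'| = |d_{GW}(\mathcal{G},B_k)-d_{GW}(\mathcal{G}',B_k)| \le d_{GW}(\mathcal{G},\mathcal{G}') \le \eta .
\end{equation*}
It follows that $\|\boldsymbol{\delta}-\boldsymbol{\delta}'\|_\infty\le\eta$ and $\|\boldsymbol{\delta}-\boldsymbol{\delta}'\|_2\le\sqrt{K}\,\eta$.

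\textbf{Step 2 (softmax is Lipschitz).} Write $\mathbf{w}=\sigma(-\boldsymbol{\delta}/\tau)$, where $\sigma$ is the softmax. Its Jacobian is $J(\mathbf{z})=\mathrm{diag}(\mathbf{p})-\mathbf{p}\mathbf{p}^\top$, which is symmetric positive semidefinite. Its largest eigenvalue is bounded by the trace, $\sum_k p_k(1-p_k)\le 1-\|\mathbf{p}\|_2^2\le 1$. A sharper bound, via Gershgorin's theorem, is $\|J\|_2\le \max_k 2p_k(1-p_k)\le 1/2$. Applying the mean value inequality along the segment between $-\boldsymbol{\delta}/\tau$ and $-\boldsymbol{\delta}'/\tau$ gives
\begin{equation*}
\|\mathbf{w}-\mathbf{w}'\|_2 \le \tfrac{1}{2}\cdot\tfrac{1}{\tau}\|\boldsymbol{\delta}-\boldsymbol{\delta}'\|_2 .
\end{equation*}

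\textbf{Step 3 (combine).} Chaining Steps 1 and 2 yields $\|\mathbf{w}-\mathbf{w}'\|_2\le \frac{\sqrt{K}}{2\tau}\eta$, so we may take $L_w=\sqrt{K}/(2\tau)$.

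A dimension-free bound is also available. Instead of the $\ell_2$ bound, use $\|J\|_{\infty\to 1}$: the vector $J\mathbf{v}$ has $\ell_1$ norm at most $2\|\mathbf{v}\|_\infty$. This gives $\|\mathbf{w}-\mathbf{w}'\|_2\le\|\mathbf{w}-\mathbf{w}'\|_1\le 2\eta/\tau$. We state whichever constant is cleaner, probably $L_w=\min(\sqrt{K}/2,2)/\tau$.

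\textbf{Main obstacle.} The real difficulty is Step 1 when $\delta_k$ is computed with SGW rather than exact GW. SGW is not a metric on isomorphism classes in general; it depends on embeddings and projections. One remedy is to assume SGW obeys a triangle-type inequality, which holds for the sliced quantity with fixed projections since each slice is a GW between 1D measures. Another is to assume it is Lipschitz relative to $d_{GW}$ with some constant $c$, which then multiplies $L_w$. The softmax step itself is routine.
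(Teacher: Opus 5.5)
Your proposal is correct and follows the paper's proof. The reverse triangle inequality for $d_{GW}$ applied to each base gives $\|\boldsymbol{\delta}-\boldsymbol{\delta}'\|_2\le\sqrt{K}\,\eta$, and the $1/\tau$-scaled softmax Lipschitz bound then yields $L_w=\sqrt{K}L_{sm}/\tau$; your explicit $L_{sm}\le 1/2$ via Gershgorin (instead of citing Gao--Pavel) and your dimension-free $\ell_\infty\to\ell_1$ variant are valid refinements, and your SGW caveat flags an assumption (exact GW distances) that the paper's proof also makes without stating it.
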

where $L_w = \frac{\sqrt{K}L_{sm}}{\tau}$ is the Lipschitz constant and $L_{sm}$ is the Lipschitz constant of the softmax function over a bounded domain.
We provide the detailed proof in Appendix~\ref{proof:w_stability}.

\textbf{Reconstruction of Graph-level Structure.}
Linear combination is not a literal mm-space barycenter, but an efficient approximation of a GW barycenter expansion through a finite basis dictionary.
Consequently, the structural reconstruction loss is formulated as:
\begin{equation} \label{eq:gw_loss}
  \mathcal{L}_{\text{gw}} = \mathbb{E}_{G} \left[d_{GW}(\mathbf{A}, \widetilde{\mathbf{B}}(G)) \right].
\end{equation}

\textbf{Reconstruction of Graph-level Statistics.}
To ensure that structural coordinates preserve key graph-level statistics, we decode $\mathbf{w}$ using a Multi-Layer Perceptron (MLP) decoder $f(\cdot) \in \mathbb{R}^r$ . 
The reconstruction target includes degree histogram, clustering histogram, and log-scaled motifs (triangles and short cycles).
The reconstruction loss is:
\begin{equation} \label{eq:rec_loss}
  \mathcal{L}_{\text{rec}} = \text{MSE}(\text{FE}(G),f(\mathbf{w})),
\end{equation}
where $\text{FE}(\cdot)$ denotes the feature-extraction operator that computes graph-level statistics. 

\begin{corollary}[\textbf{Stability of Statistics Reconstruction}]
  Following Theorem~\ref{the:stability}, the reconstructed graph-level statistics $f(\mathbf{w})$ is Lipschitz continuous with respect to the GW distance:
  \begin{equation}
    \Vert f(\mathbf{w}) - f(\mathbf{w}') \Vert 
   _2 \leq L_f \Vert \mathbf{w} - \mathbf{w}'\Vert _2 
    \leq L_f L_w \cdot \eta.
  \end{equation} 
\end{corollary}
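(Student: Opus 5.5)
The corollary is a two-step composition, so the plan is to establish each inequality separately and then chain them. The second inequality, $L_f\Vert \mathbf{w}-\mathbf{w}'\Vert_2 \le L_f L_w\,\eta$, is Theorem~\ref{the:stability} multiplied by the nonnegative constant $L_f$. All the work therefore lies in the first inequality: the decoder $f$ must be globally Lipschitz on the set where structural coordinates live, with a finite constant $L_f$.

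\textbf{Step 1: pin down the domain.} By Eq.~\ref{eq:weights}, every $\mathbf{w}$ has nonnegative entries summing to one. Hence $\mathbf{w}$ and $\mathbf{w}'$ both lie in the probability simplex $\Delta^{K-1}\subset\mathbb{R}^K$, which is compact and convex. Convexity means the segment $\mathbf{w}_t=(1-t)\mathbf{w}'+t\mathbf{w}$, $t\in[0,1]$, stays inside $\Delta^{K-1}$. So a Lipschitz bound for $f$ on $\Delta^{K-1}$ alone suffices, and I need no control of $f$ elsewhere.

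\textbf{Step 2: Lipschitz constant of the MLP.} Write the decoder as $f=\ell_L\circ\sigma\circ\ell_{L-1}\circ\cdots\circ\sigma\circ\ell_1$, with affine layers $\ell_j(\mathbf{z})=\mathbf{W}_j\mathbf{z}+\mathbf{b}_j$ and a componentwise activation $\sigma$. I assume $\sigma$ is Lipschitz with constant $L_\sigma$; ReLU, LeakyReLU, tanh and sigmoid all qualify, with $L_\sigma\le 1$. Each affine layer is Lipschitz in $\ell_2$ with constant $\Vert\mathbf{W}_j\Vert_{2}$, its spectral norm. A componentwise $L_\sigma$-Lipschitz map is $L_\sigma$-Lipschitz in $\ell_2$. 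Lipschitz constants multiply under composition, so
\[
\Vert f(\mathbf{u})-f(\mathbf{v})\Vert_2\le L_f\Vert\mathbf{u}-\mathbf{v}\Vert_2,\qquad L_f=L_\sigma^{L-1}\prod_{j=1}^{L}\Vert\mathbf{W}_j\Vert_2 .
\]
This bound holds for all $\mathbf{u},\mathbf{v}$, in particular on $\Delta^{K-1}$. For a smooth activation there is an alternative route: take $L_f=\sup_{\mathbf{z}\in\Delta^{K-1}}\Vert J_f(\mathbf{z})\Vert_2$, which is finite by continuity of the Jacobian on a compact set. Then apply the mean-value inequality along $\mathbf{w}_t$ from Step 1.

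\textbf{Step 3: chain.} Applying Step 2 with $\mathbf{u}=\mathbf{w}$ and $\mathbf{v}=\mathbf{w}'$ gives the first inequality. Theorem~\ref{the:stability} then gives the second, which yields the claimed bound with constant $L_fL_w=L_f\sqrt{K}L_{sm}/\tau$.

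\textbf{Main obstacle.} The only genuine issue is that $L_f$ must be finite and fixed. The MLP weights are trained, so the constant depends on the learned parameters. The statement should be read for a fixed trained decoder, after training, where $\prod_j\Vert\mathbf{W}_j\Vert_2<\infty$ holds automatically. If one wants a bound uniform over training, I would add a spectral-norm constraint or weight decay that caps each $\Vert\mathbf{W}_j\Vert_2$. One must also rule out non-Lipschitz activations, such as an unbounded-derivative activation; if one were used, the compactness route from Step 2 still works as long as the activation is $C^1$.
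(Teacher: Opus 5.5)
Your proposal is correct and follows the same argument as the paper. The paper takes $L_f$ to be the Lipschitz constant of the frozen decoder by definition, multiplies the bound of Theorem~\ref{the:stability} by it, and gives no separate proof; your Step~2 additionally justifies that $L_f$ is finite for an MLP (via the product of layer spectral norms, or a Jacobian bound on the compact simplex), which the paper leaves implicit.
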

where $L_f$ is the Lipschitz constant of the decoder $f(\cdot)$.
This guarantees that structurally similar graphs produce similar reconstructed statistical descriptors, thereby preserving intrinsic geometric semantics.

\textbf{Regularization of Structural Diversity.}
To prevent bases from collapsing, we enforce diversity by minimizing the pairwise similarity of the learnable geometric bases $\mathbf{B}_k$:
\begin{equation} 
  \mathcal L_{\text{div}} = \frac{1}{|\mathcal P|} \sum_{(i,j)\in \mathcal P} \text{max}(0, m- \Vert\mathbf{B}_i - \mathbf{B}_j \Vert_{\mathrm{F}}),
  \label{eq:div_loss}
\end{equation}
where $\mathcal{P} = \left\{ (i,j) |1 \leq i <j \leq K \right\}$ represents the set of all unique pairs of bases and $\Vert \cdot \Vert_{\mathrm{F}} $ denotes the Frobenius norm.
This loss enforces a minimum separation of 
$m$ between any pair of bases, ensuring that the geometric bases span a broad region of the structural space.

\textbf{Total Objective.} 
The final training objective is a weighted sum of the reconstruction loss and the regularization terms:
\begin{equation}
  \mathcal{L}_{\text{total}} = \mathcal{L}_{\text{gw}} + \alpha \mathcal{L}_{\text{rec}} + \beta \mathcal{L}_{\text{div}},
  \label{eq:total_loss}
\end{equation}
where $\alpha$ and $\beta$ are trade-off hyperparameters. 

\subsection{Downstream Projection}
To derive a unified graph representation using pre-trained geometric bases,
we project each input graph into a shared geometric coordinate system
and aggregate the projected structural and feature information.

\textbf{Coordinate Projection.}
For a given input graph $G_i$, we directly compute its structural coordinates $\mathbf{w}$ with respect to the pre-trained geometric bases $B$, strictly following the definition in Eq.~\ref{eq:weights}.
This maps the input graph to the shared geometric coordinate system.

\textbf{Feature Projection.}
Using the OT plan $\mathbf{T}_{ik} \in \mathbb{R}^{N \times M}$ obtained during the GW distance computation between $(G_i, B_k)$ to project node features onto the geometric bases.
To adhere to the injectivity requirements for multiset representations \cite{xu2018gin}, we adopt summation aggregation.
Specifically, since $\mathbf{T}_{ik}$ operates as a normalized measure (inducing averaging), we rescale the projection by $N$ to restore signal magnitude:
\begin{equation} \label{eq:project_h}
  \begin{split}
    \mathbf{H}_k &= N *\mathbf{T}_{ik}^{\top} \mathbf{X}_i \in \mathbb{R}^{M \times F}, \\
    \mathbf{H}(G_i) &= \sum_{k=1}^{K} w_k \mathbf{H}_k \in \mathbb{R}^{M \times F}.
  \end{split}
\end{equation}
Finally, we construct the final graph embedding $\mathbf{z}(G_i)$ by concatenating the structural coordinates  $\mathbf{w}$, reconstruction of graph-level statistics and the projected features:
\begin{equation} \label{eq:graph_embedding}
  \mathbf{z}(G_i) = [\mathbf{w} \mathbin{\|} f(\mathbf{w}) \mathbin{\|} \text{vec}(\mathbf{H}(G_i))] \in \mathbb{R}^{K+r+MF}.
\end{equation}
Note that we use the frozen pre-trained decoder $f(\cdot)$.

\section{Experiments}
In this section, we evaluate the performance of SCGFM on few-shot classification tasks. 
Specifically, we aim to answer the following research questions:
(1) \textbf{RQ1 (Performance):} Does SCGFM achieve superior generalization capabilities in few-shot learning, encompassing both in-domain and cross-domain transfers?
(2) \textbf{RQ2 (Ablation and Scalability):} How do individual components and key design choices contribute to the effectiveness of SCGFM, and can the proposed design remain scalable to large graphs?
(3) \textbf{RQ3 (Interpretability):} What do the learned geometric bases look like, and do they possess meaningful physical or topological interpretations?
(4) \textbf{RQ4 (Mapping Quality):} How well does the learned latent space preserve the intrinsic geometric properties of the input graphs?

\newcommand{\res}[2]{$#1_{\pm #2}$}
\newcommand{\resb}[2]{$\mathbf{#1}_{\pm \mathbf{#2}}$}
\begin{table*}[!ht]
    \caption{ Accuracy (\% $\pm$ standard deviation for 50 runs ) of 5-shot graph classification.
    OOD is Out-of-Domain.
    COL-3 = COLORS-3. 
    IMDB-B = IMDB-BINARY. 
    S1 = COX2 + NCI1 + BZR. 
    S2 = COLLAB + IMDB-B. 
    The best results are shown in \textbf{bold} and the runner-ups are \underline{underlined}.}
    \label{tab:few_shot_graph}
    \begin{center}
        \begin{small}
                \resizebox{\textwidth}{!}{
                    \begin{tabular}{lccccccccccc}
                        \toprule
                        Method 
                            & \multicolumn{3}{c}{In-Domain} 
                            & \multicolumn{3}{c}{Trained on S1 (OOD)} 
                            & \multicolumn{4}{c}{Trained on S2 (OOD)} \\
                        \cmidrule(lr){2-4} \cmidrule(lr){5-7}  \cmidrule(lr){8-11}
                                          & COX2               & NCI1               & BZR                & COLLAB             & IMDB-B             & COL-3               & COL-3             & COX2                & NCI1              & BZR               & Avg.\\
                        \midrule
                        GCN \cite{kipf2017gcn}& \res{49.84}{4.24}  & \res{51.85}{4.46}  & \res{54.41}{6.93}  & \res{46.35}{6.72}  & \res{51.21}{8.59}  & \res{9.53}{1.06}   & \res{9.37}{0.95}   & \res{50.33}{3.74}  & \res{51.75}{5.52}  & \res{57.66}{5.45}               &43.23     \\
                        GAT \cite{veličković2018gat} & \res{52.05}{4.29}  & \res{52.36}{5.02}  & \res{55.26}{5.39}  & \res{32.82}{3.12}  & \res{49.60}{2.92}  & \res{9.12}{0.83}   & \res{9.38}{0.98}   & \res{51.42}{5.01}  & \res{51.98}{5.79}  & \res{55.29}{6.88}                            &41.93     \\
                        GIN \cite{xu2018gin} & \res{54.31}{6.36}  & \res{52.95}{4.77}  &  \res{51.29}{4.67} & \res{58.11}{4.48}  & \res{55.36}{8.28}  & \res{9.25}{1.10}   & \res{9.12}{1.03}   & \resb{55.16}{7.05}  & \res{51.60}{5.64}  & \res{51.44}{5.28}                            &44.85      \\
                        \midrule
                        DGI \cite{velickovic2018deep} & \res{49.34}{5.34}  & \res{50.86}{4.85}  & \res{50.30}{6.10}  & \res{51.77}{7.06}  & \res{54.26}{9.18}  & \res{9.67}{1.17}   & \res{9.77}{1.21}   & \res{49.40}{5.18}  & \res{50.60}{4.69 } & \res{49.98}{6.94}                            &42.60     \\
                        GraphCL \cite{you2020graph}  & \res{54.68}{7.47}  & \underline{\res{57.22}{9.75}}  & \underline{\res{60.28}{8.73}}  & \res{41.95}{7.48}  & \res{48.26}{4.78}  & \res{9.17}{1.43}   & \res{9.25}{2.19}   & \res{50.14}{6.06}  & \underline{\res{56.78}{10.23}} & \res{55.50}{6.28}    &43.90     \\
                        GraphMAE \cite{hou2022graphmae}  & \underline{\res{55.16}{11.46}} & \res{52.27}{4.06}  & \res{56.58}{12.62}  & \res{36.76}{7.53}  & \underline{\res{56.35}{6.48}}  & \res{13.74}{1.02}  & \res{9.07}{0.49}   & \underline{\res{55.09}{9.33}}  & \res{50.30}{2.89}  & \res{54.52}{10.65}  &43.98     \\
                        GraphACL\cite{xiao2023simple}    & \res{51.70}{6.91}  & \res{50.90}{5.16}  & \res{57.72}{8.70}  & \resb{63.59}{3.48}  & \res{53.79}{6.55}  & \res{11.91}{1.37}  & \res{9.13}{0.89}   & \res{51.00}{6.68}  & \res{50.44}{5.43}  & \res{51.50}{4.85}                           &45.17     \\
                        S2GAE \cite{tan2023s2gae}      & \res{53.19}{5.11}  & \res{52.14}{4.95}  & \res{55.83}{6.96}  & \res{54.79}{7.14}  & \res{50.48}{4.03}  & \underline{\res{14.47}{1.66}}  & \underline{\res{15.65}{2.02}}  & \res{52.51}{4.53}  & \res{51.75}{4.92}  & \res{56.10}{6.84}    &\underline{45.70}      \\
                        \midrule
                        GCOPE \cite{sun2024gcope}   & \res{53.20}{9.06}  & \res{51.75}{10.31}  & \res{56.90}{8.92}  & \res{61.07}{7.61}  & \res{51.60}{9.07}  & \res{9.16}{1.81}   & \res{8.95}{1.58}   & \res{50.60}{6.55}  & \res{51.95}{7.47}  & \res{58.10}{10.22}                            &45.32    \\
                        RiemannGFM \cite{sun2025riemanngfm} & \res{50.78}{1.41}  & \res{50.84}{2.18}  & \res{50.60}{1.75}  & \res{61.52}{2.64}  & \res{56.32}{3.06}  & \res{8.88}{1.00}   & \res{10.36}{0.87}  & \res{50.36}{1.77}  & \res{51.28}{2.19}  & \underline{\res{58.24}{3.04}}                           &44.92     \\
                        GIT \cite{wangtowards}      & \res{51.54}{5.31}  & \res{50.70}{6.52}  & \res{51.86}{8.65}  & \res{49.32}{12.17} & \res{54.62}{8.44}  & \res{9.37}{1.09}   & \res{9.04}{ 1.45}  & \res{49.30}{4.08}  & \res{52.12}{6.72}  & \res{52.46}{6.88}                &43.03      \\
                        \midrule
                        SCGFM (Ours)              & \resb{56.60}{6.12}  & \resb{58.00}{8.90}  & \resb{60.80}{6.60}  & \underline{\res{61.93}{6.11}}  & \resb{56.62}{9.00}  & \resb{26.31}{2.99}  & \resb{26.97}{3.08} & \res{53.80}{5.61}  & \resb{57.50}{7.32}  & \resb{58.50}{6.28} &\textbf{51.70}     \\
                        \bottomrule
                    \end{tabular}
                }
        \end{small}
    \end{center}
    \vskip -0.1in
\end{table*}

\begin{table*}[!ht]
    \caption{ Accuracy (\% $\pm$ standard deviation for 50 runs ) of 5-shot node classification.
    N1 = Cora + CiteSeer + PubMed.
    N2 = Photo + Computers.
    The best results are shown in \textbf{bold} and the runner-ups are \underline{underlined}.
    }
    \label{tab:few_shot_node}
    \begin{center}
        \begin{small}
                \resizebox{\textwidth}{!}{
                    \begin{tabular}{lcccccccccc|c}
                        \toprule
                        Method & \multicolumn{3}{c}{In-Domain} & \multicolumn{3}{c}{Trained on N1 (OOD)} & \multicolumn{4}{c}{Trained on N2 (OOD)} \\
                        \cmidrule(lr){2-4} \cmidrule(lr){5-7}  \cmidrule(lr){8-11}
                                          & Cora                & CiteSeer          & PubMed             & Photo              & Computers          & Reddit             & Cora                & CiteSeer            & PubMed             & Reddit               & Avg.\\
                        \midrule
                        GCN \cite{kipf2017gcn}     & \res{36.01}{3.33}  & \res{27.21}{3.88}  & \res{44.08}{5.32}  & \res{41.21}{4.20}  & \res{28.00}{3.35}  & \res{18.79}{1.26}  & \res{47.13}{2.26}   & \res{30.19}{4.18}  & \res{42.57}{5.76}  & \res{44.54}{1.36}       &   35.97  \\
                        GAT \cite{veličković2018gat}   & \res{39.47}{3.89}  & \res{30.65}{4.21}  & \res{45.68}{5.58}  & \res{35.60}{3.39}  & \res{31.01}{3.19}  & \res{13.87}{1.65}  & \res{48.38}{3.18}   & \res{35.38}{4.31}  & \underline{\res{52.56}{6.89}}  & \res{20.60}{1.59}       &  35.32   \\
                        GIN \cite{xu2018gin}    & \res{39.23}{3.70}  & \res{24.60}{4.71}  & \res{41.48}{7.10} & \res{39.48}{4.84}  & \res{32.52}{2.89}  & \res{19.98}{1.79}   & \res{32.55}{3.01}   & \res{25.68}{3.84}  & \res{34.77}{5.65}  & \res{22.26}{1.49}       &   31.26    \\
                        \midrule
                        DGI \cite{velickovic2018deep}   & \res{54.61}{6.75} & \res{28.50}{6.42} & \res{54.60}{7.86} & \res{58.44}{5.56} & \res{42.18}{4.54} & \res{42.14}{2.38} & \res{50.67}{6.00} & \res{25.13}{5.28} & \res{51.07}{8.33} & \res{62.60}{2.55} &   47.00  \\
                        GraphCL \cite{you2020graph}     & \res{60.82}{6.38}  & \underline{\res{35.50}{5.81}}   & \res{57.19}{8.72}  & \res{50.95}{3.12}  & \res{30.44}{3.93}  & \res{67.75}{1.58}   & \underline{\res{59.31}{3.02}}   & \underline{\res{38.53}{4.47}}  & \res{43.17}{5.57}  & \underline{\res{67.56}{1.15}}     &  \underline{51.12}    \\
                        GraphMAE \cite{hou2022graphmae}    & \res{38.50}{5.38} & \res{34.32}{4.94} & \res{36.83}{4.36} & \res{31.57}{6.45} & \res{31.86}{6.71} & \res{13.39}{1.12} & \res{34.59}{4.33} & \res{28.45}{4.51} & \res{34.55}{3.71} & \res{32.21}{1.62} &   31.63  \\
                        GraphACL\cite{xiao2023simple}   & \res{48.00}{5.44} & \res{34.02}{5.86} & \res{35.70}{7.54} & \res{59.51}{4.94} & \res{51.48}{4.85} & \res{44.91}{2.22} & \res{34.79}{4.57} & \res{26.00}{5.69} & \res{35.17}{7.02} & \res{45.94}{2.23} &  41.56   \\
                        S2GAE \cite{tan2023s2gae}  &\underline{\res{66.83}{5.53}}  &\res{32.55}{6.38} &\underline{\res{58.90}{8.35}} & \underline{\res{61.13}{4.66}} &\resb{65.59}{3.96}  &\underline{\res{75.28}{1.65}}  &\res{33.73}{5.21}  &\res{23.73}{5.21}  &\res{39.77}{7.91}  &\res{40.95}{1.94}    &  49.87   \\
                        \midrule
                        GCOPE \cite{sun2024gcope}       & \res{47.84}{4.79}  & \res{30.13}{6.00}  & \res{43.23}{6.54}  & \res{60.92}{4.54}  & \res{53.70}{3.90}  & \res{55.72}{2.02}   & \res{43.43}{6.05}   & \res{26.08}{4.82}  & \res{42.23}{7.74}  & \res{23.00}{2.36}      &  42.63   \\
                        RiemannGFM \cite{sun2025riemanngfm} & \res{51.09}{1.10} & \res{31.95}{1.44} & \res{40.96}{2.25} & \res{48.97}{1.10} & \res{31.44}{1.05} & \res{33.51}{0.41} & \res{46.27}{1.23} & \res{31.61}{1.17} & \res{39.07}{1.59} & \res{18.65}{0.28} &   37.35  \\
                        GIT \cite{wangtowards}         & \res{32.01}{5.39} & \res{28.75}{4.71} & \res{36.83}{4.36} & \res{42.86}{6.20} & \res{35.76}{7.40} & \res{33.50}{5.76} & \res{34.91}{4.68} & \res{26.35}{4.36} & \res{34.65}{3.21} & \res{32.90}{5.57} &   33.85  \\
                        \midrule
                        SCGFM (Ours)              & \resb{70.54}{3.27}  & \resb{43.83}{4.26}  & \resb{68.19}{5.30}  & \resb{68.30}{3.93}  & \underline{\res{57.35}{4.22}}  & \resb{84.17}{1.23}  & \resb{70.55}{3.28} & \resb{43.82}{4.26}  & \resb{68.15}{5.26}  & \resb{84.16}{1.23} &  \textbf{65.90}  \\
                        \bottomrule
                    \end{tabular}
                }
        \end{small}
    \end{center}
    \vskip -0.1in
\end{table*}
\subsection{Experimental Setup}


\textbf{Datasets and Tasks.}
We evaluate cross-domain generalization on \textbf{12 datasets} from \textbf{5 domains}, covering both \textit{graph classification} and \textit{node classification}.
Graph classification includes COX2, NCI1, BZR (\textbf{Molecules}), COLLAB, IMDB-BINARY (\textbf{Social Networks}), and COLORS-3 (\textbf{Synthetic}).
Node classification includes Cora, CiteSeer, PubMed (\textbf{Academic}), Photo, Computers (\textbf{E-commerce}), and Reddit (\textbf{Social Networks}).
Dataset statistics are provided in Appendix~\ref{datasets}.

\textbf{Evaluation Protocol.}
We adopt a unified few-shot transfer learning paradigm:
models are pre-trained on the source domain and directly evaluated on target domains with a \textbf{fixed encoder}.
Downstream performance is measured using \textbf{Prototypical Network (PN)} for classification.

\textbf{Node Classification.}
We formulate node classification as a graph classification task via node-centric subgraph sampling. 
Adopting the \textbf{Personalized Page Rank (PPR)} sampling strategy \cite{Zeng2020GraphSAINT}, we extract subgraphs (size $\leq$ 100) centered on target nodes, which inherit the central node's label.
Details are deferred to Appendix~\ref{ap:experiment_setup}.

\textbf{Baselines.}
We compare SCGFM with:
(1) \textbf{Vanilla GNNs:} GCN~\cite{kipf2017gcn}, GIN~\cite{xu2018gin}, GAT~\cite{veličković2018gat};
(2) \textbf{Self-supervised methods:} DGI~\cite{velickovic2018deep}, GraphCL~\cite{you2020graph}, GraphMAE~\cite{hou2022graphmae}, GraphACL~\cite{xiao2023simple}, S2GAE~\cite{tan2023s2gae};
(3) \textbf{GFMs:} GCOPE~\cite{sun2024gcope}, RiemannGFM~\cite{sun2025riemanngfm}, and GIT~\cite{wangtowards}. For fair comparison, all pre-training methods adopt the same two-layer GCN backbone unless otherwise specified.

\subsection{RQ1: Few-Shot Node and Graph Classification}
We evaluate SCGFM under a unified 5-shot, fixed-encoder protocol on both graph-level (Table~\ref{tab:few_shot_graph}) and node-level (Table~\ref{tab:few_shot_node}) transfer benchmarks.
Overall, SCGFM consistently outperforms strong Self-supervised Learning (SSL) baselines and recent GFM-style methods, showing superior robustness across domains and tasks.

\textbf{Graph Classification.}
SCGFM achieves strong in-domain performance on COX2, NCI1, and BZR, and maintains clear advantages in cross-domain transfer.
In particular, on COLORS-3—characterized by large structural shifts and an 11-way label space—most baselines degrade to near chance-level accuracy, while SCGFM remains substantially more robust.
This gain is not solely driven by COLORS-3: excluding its two transfer columns, SCGFM still attains a mean accuracy of $57.82\%$, outperforming the suboptimal RiemannGFM by $4.08\%$.

\textbf{Source-independent Stability in Node Classification.}
A striking observation from Table~\ref{tab:few_shot_node} is that SCGFM exhibits near source-independent behavior in node classification.
For example, on Reddit, performance remains consistently high ($\approx$$84.17\%$) regardless of the encoder’s pre-training domain.

To better understand this phenomenon, we analyze the similarity of target-domain embeddings produced by encoders pre-trained on different source domains (Figure~\ref{fig:tsne_comparison}).
We report \textbf{Centered Kernel Alignment (CKA $ \in [0,1]$)}~\cite{kornblith2019similarity} and the \textbf{Pearson correlation coefficient} ($\rho\in[-1,1]$) to quantify representation similarity across source domains, where larger values in both metrics indicate stronger agreement between the resulting embedding spaces.

Despite being trained on distinct sources, the resulting representations on Reddit display highly consistent latent geometries.
This suggests that SCGFM learns a dominant semantic structure in the embedding space that is largely invariant to the choice of pre-training domain.

\begin{figure}[!ht]
  \centering
  \includegraphics[width=0.9\columnwidth]{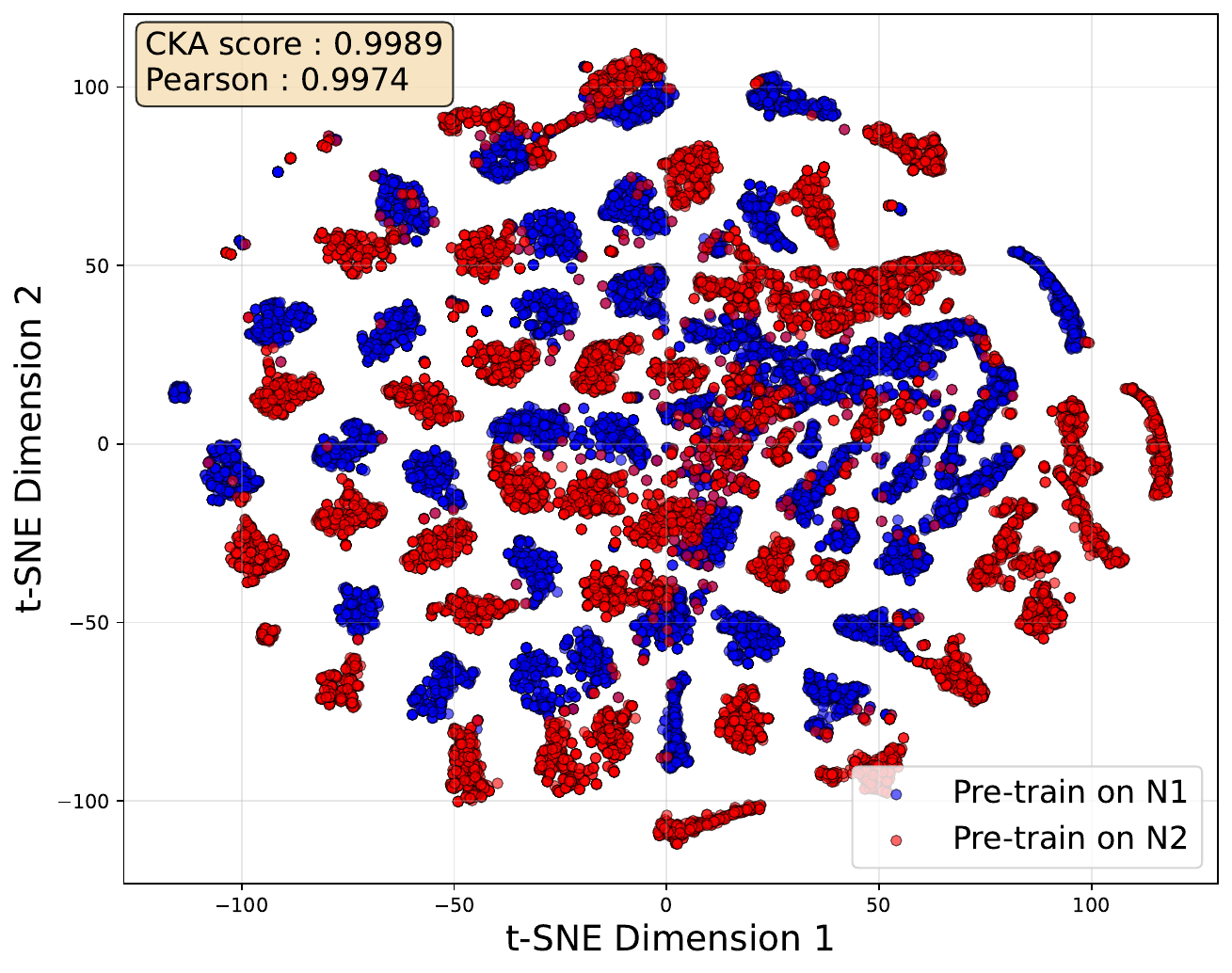}
  \caption[]{
      \textbf{Representation similarity across source domains on Reddit.}
      The t-SNE~\cite{maaten2008visualizing} visualization of node embeddings generated by encoders pre-trained on two different source domains.
  }
  \label{fig:tsne_comparison}
\end{figure}

This contrasts with the graph classification results in Table~\ref{tab:few_shot_graph}, where domain shifts lead to noticeable performance variations.
We attribute this difference to the dominance of high-dimensional semantics.
Unlike graph classification datasets (e.g., molecular graphs), which rely heavily on structural topology, node classification benchmarks are rich in semantic features.
SCGFM projects these features into an ultra–high-dimensional space ($\approx$45k dimensions on Cora with $M=32$).
According to \textbf{Cover's Theorem}~\cite{cover2006geometrical}, such a projection greatly increases the likelihood of linear separability, causing the decision boundary to be driven primarily by the intrinsic geometry of the semantic features rather than the pre-training domain.

Consequently, node classification performance is largely invariant to the pre-training source, and the remaining variance is dominated by task sampling effects rather than instability in the learned representations.
\begin{figure}[!t]
  \centering
  \includegraphics[width=0.7\linewidth]{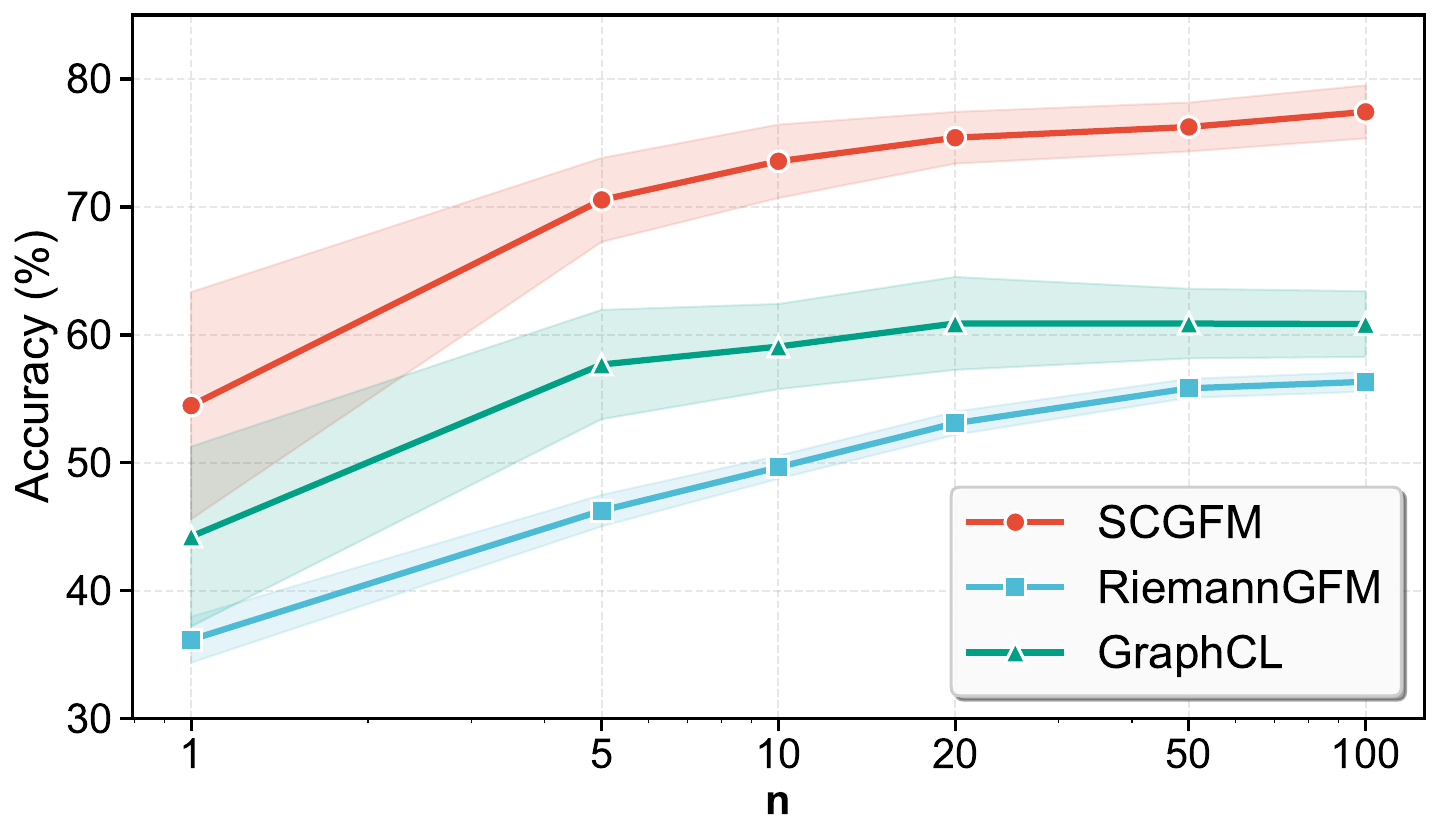}
  \caption{ \textbf{n-shot node classification on Cora.} The colored region denotes the variance of the results.
  }
  \label{fig:n_shot_node}
\end{figure}

\textbf{Impact of Shot Number.}
To assess robustness, we extend the evaluation across varying support sizes $n \in \{1, 5, \ldots, 100\}$.
As shown in Figure~\ref{fig:n_shot_node}, SCGFM consistently outperforms both baselines compared, with the largest margin observed in the extreme 1-shot regime, validating the effectiveness of our geometric inductive bias under severe data scarcity.
Moreover, performance improves steadily with increasing supervision, indicating that SCGFM refines class representations without overfitting to outlier-induced variance.

\begin{figure}[!t]
  \centering
  \includegraphics[width=\columnwidth]{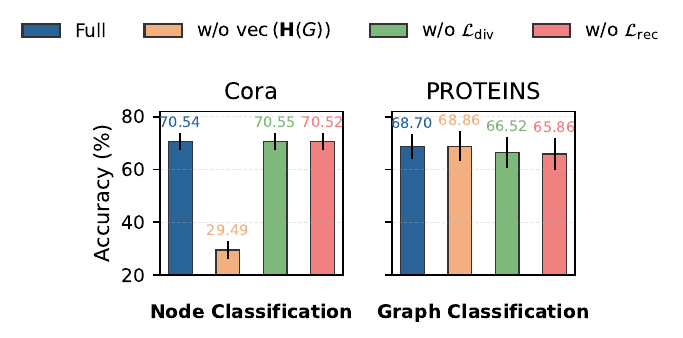}
  \caption[]{
      Ablation studies on Cora and PROTEINS (5-shot).
  }
  \label{fig:ablation_study}
\end{figure}
\subsection{RQ2: Ablation and Scalability}

\textbf{Ablation Analysis.} 
As illustrated in Figure~\ref{fig:ablation_study}, the contribution of individual components varies markedly across data modalities, revealing a mechanism of adaptive reliance.
On \textbf{Cora}, which features high-dimensional and informative node attributes, the learned representation is dominated by structure-aligned semantics $\mathrm{vec}(\mathbf{H}(G))$.
As expected, removing this component leads to a catastrophic performance drop ($70.54\% \rightarrow 29.49\%$), indicating that geometric coordinates alone are insufficient for content-heavy tasks.

In contrast, on the structure-centric \textbf{PROTEINS} dataset, removing $\mathrm{vec}(\mathbf{H}(G))$ yields a marginal performance improvement ($68.70\% \rightarrow 68.86\%$).We attribute this counterintuitive effect to semantic interference: when labels are primarily determined by intrinsic topology rather than node attributes, injecting weak or misaligned semantic signals ($\mathbf{H}(G)$) introduces noise that dilutes the discriminative power of the geometric coordinates $\mathbf{w}$.

Despite this modality-dependent dichotomy, the optimization constraints are universally critical.
Removing either the reconstruction loss $\mathcal{L}{\mathrm{rec}}$ or the diversity regularizer $\mathcal{L}{\mathrm{div}}$ consistently degrades performance across all datasets, confirming their necessity for learning a stable and non-degenerate geometric basis space.

\textbf{Scalability.}
We evaluate scalability on synthetic datasets consisting of 1k graphs, with the average graph size varying up to 5k nodes.
As shown in Figure~\ref{fig:scalability_comparison}, SCGFM scales smoothly to $5 \times 10^6$ total nodes while maintaining the lowest peak memory consumption ($\approx$12 GB) among all methods.
In contrast, \textbf{GAT} and \textbf{GIT} encounter out-of-memory (OOM) errors at $3 \times 10^6$ nodes due to their high computational overhead.
At $5 \times 10^6$ nodes, both \textbf{GIN} and \textbf{RiemannGFM} exceed memory limits because of their substantially higher memory usage.
\begin{figure}[t]
  \centering
  \includegraphics[width=\linewidth]{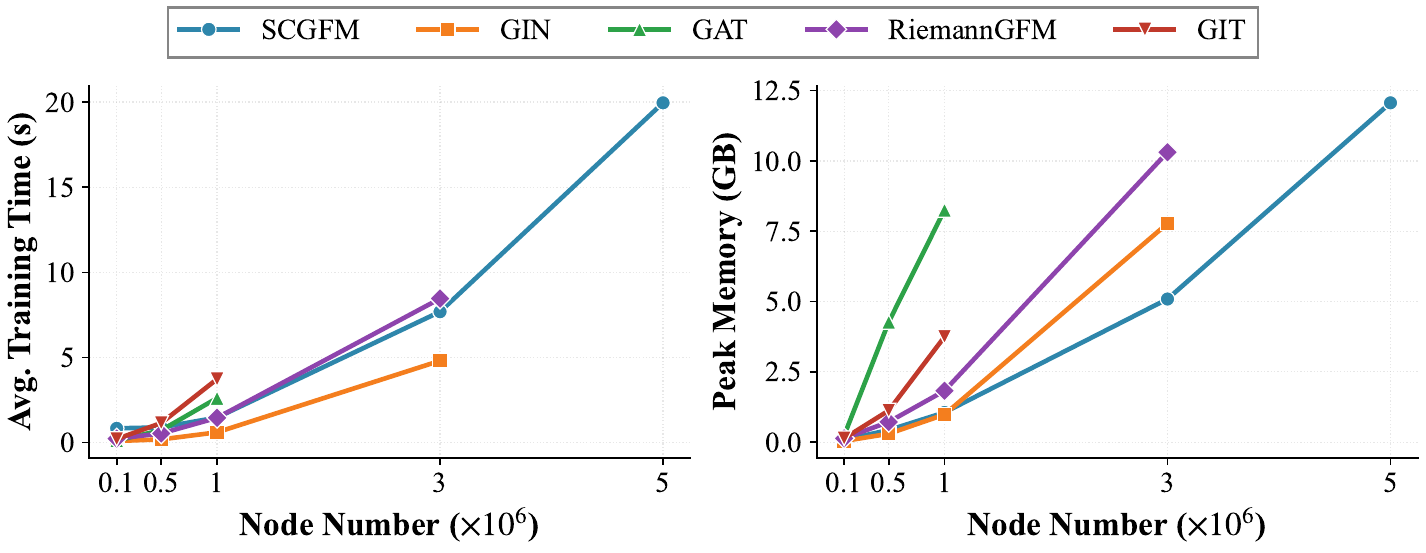}
  \caption[]{
      Scalability comparison on synthetic datasets.
  }
  \label{fig:scalability_comparison}
\end{figure}
\begin{figure}[t]
\vskip 0.1in
  \centering
  \includegraphics[width=\linewidth]{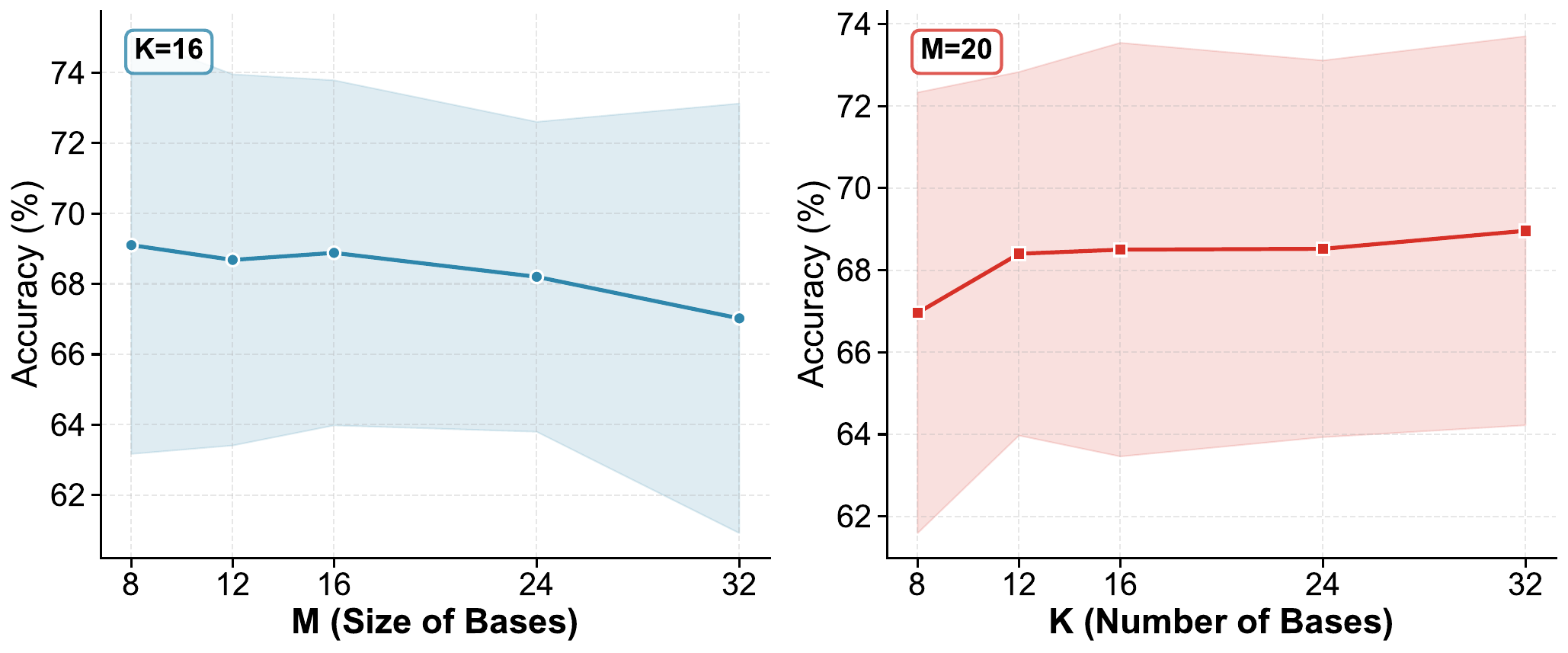}
  \caption{ Hyperparameter analysis on PROTEINS (5-shot).
  }
  \label{fig:hyperparameter}
\end{figure}

\begin{figure}[t]
  \centering
  \includegraphics[width=\columnwidth]{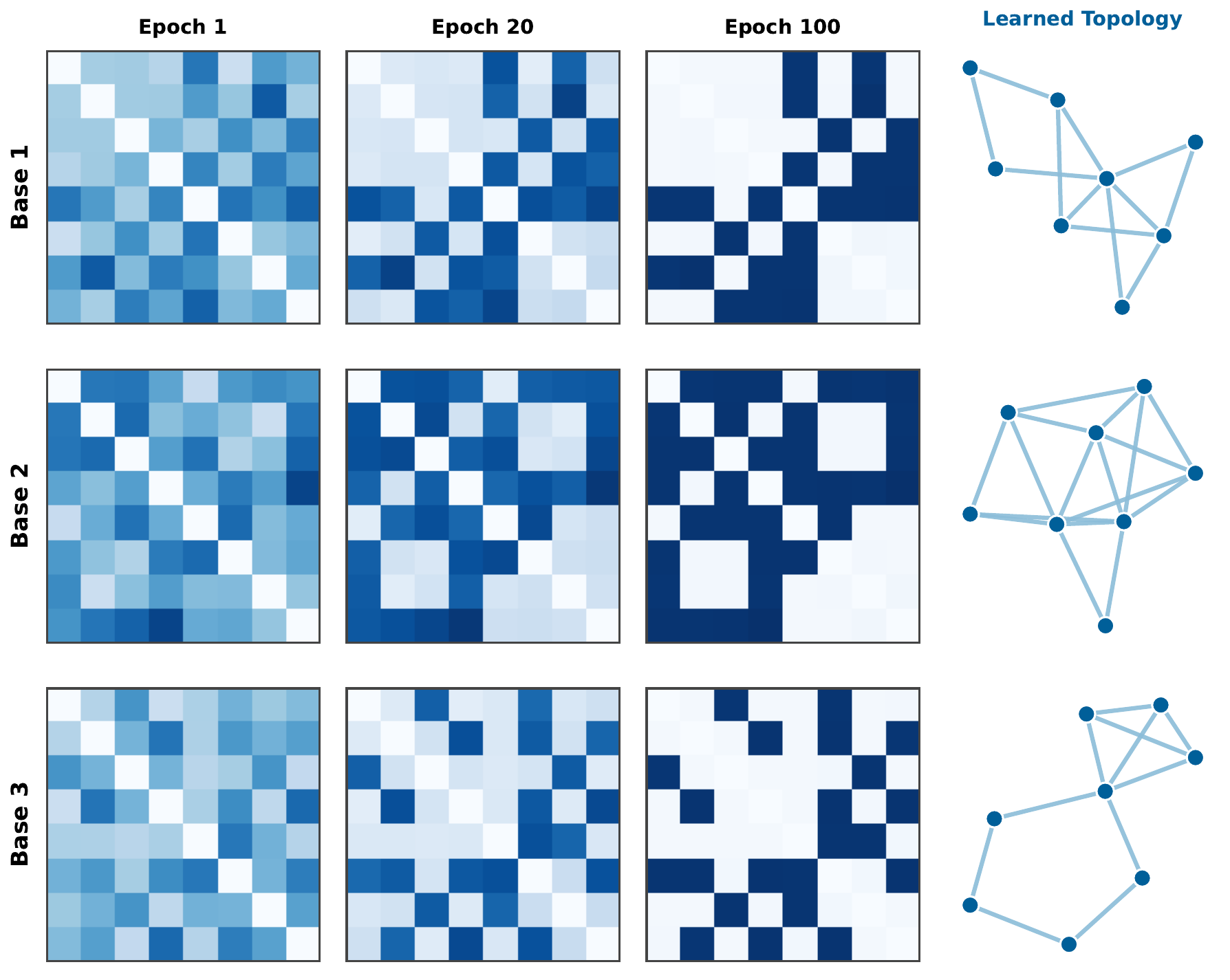}
  \caption[]{
    \textbf{Evolution of learned geometric bases on MUTAG.}
    Rows represent three distinct learned bases; columns display the pseudo-metric heatmaps during training (Epoch 1, 20, 100) and the final induced topology.
  }
    \label{fig:base_topology}
\end{figure}

\textbf{Pre-training Computational Analysis.}
The computational bottleneck of SCGFM lies in the SGW-based metric alignment
during pre-training.
By replacing exact graph matching with sliced projections, the complexity scales
as $O(KL(N\log N + M\log M))$, which reduces to $O(N\log N)$ in practice since the
number of bases $K$, the number of slices $L$, and base size $M$ are small
constants.
All remaining components introduce negligible overhead, while feature statistics
$\text{FE}(G)$ are computed once as an offline preprocessing step.

\textbf{Hyperparameters Sensitivity Analysis.}
We analyze the impact of the hyperparameters $K$ and $M$ on the performance of SCGFM (Figure~\ref{fig:hyperparameter}).
Increasing the number of bases $K$ improves accuracy up to $K=12$, beyond which performance plateaus around $68.5\% \to 69.0\%$. This suggests that a compact dictionary of 12–16 bases is sufficient to capture the underlying graph manifold in PROTEINS. SCGFM is also robust to the choice of base size $M$, exhibiting stable performance for $M \in [8,16]$, which aligns with the typical scale of functional motifs in protein graphs. Larger bases ($M=32$) introduce unnecessary complexity and slightly degrade performance.

\subsection{RQ3: Interpretability of Geometric Bases}
To evaluate the interpretability of the learned geometric bases, we visualize the training dynamics of three bases on the MUTAG dataset (Figure~\ref{fig:base_topology}).
The pseudo-metric heatmaps evolve from diffuse, unstructured patterns at initialization to sharp, high-contrast block structures after convergence, indicating that the model progressively suppresses noise and captures consistent structural relations.

The reconstructed graphs reveal that different bases specialize in distinct geometric motifs. Specifically, \textbf{Base 2} encodes a dense, highly clustered topology resembling aromatic-like substructures, whereas \textbf{Base 3} disentangles a cyclic core with a branching path, forming a recognizable ring-and-tail pattern.
Overall, the learned bases display clear specialization and diversity, demonstrating that SCGFM discovers meaningful geometric primitives rather than collapsing to a single trivial topology.

\begin{figure}[!t]
  \centering
  \includegraphics[width=0.9\linewidth]{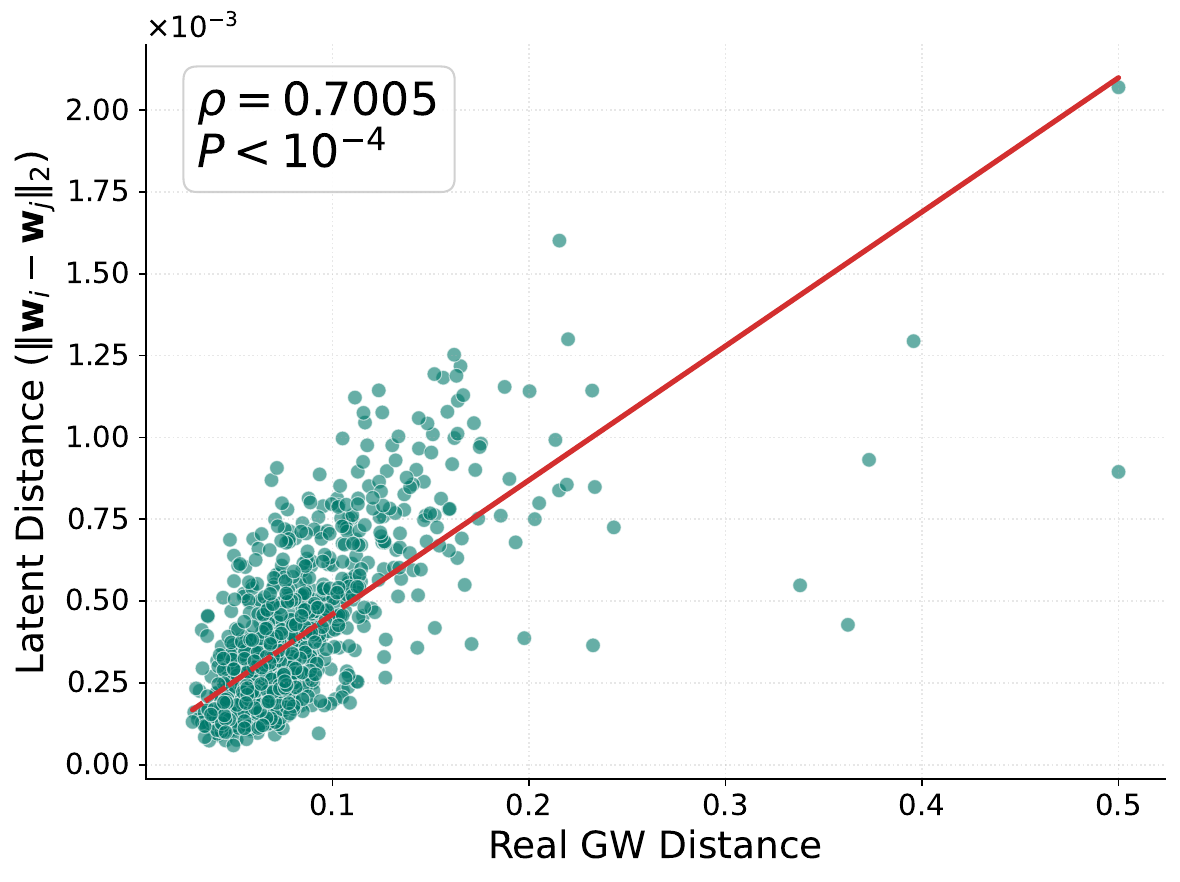}
  \caption{\textbf{ Approximate isometry on NCI1.}
  Scatter plot of latent Euclidean distances versus true GW distances. 
  The solid red line indicates the linear regression fit to the data.  
  Pearson correlation ($\rho$) and P-value are reported in the figure.
  }
  \label{fig:isometry_study}
\end{figure}

\subsection{RQ4: Geometry Alignment and Approximate Isometry}

We investigate whether the learned geometric bases induce an approximately isometric embedding, effectively mapping the complex mm-space of graphs into a structurally meaningful Euclidean vector space.
Specifically, our aim is to verify whether the Euclidean distances in the latent space consistently align with the intrinsic GW distances between graphs.
As shown in Figure~\ref{fig:isometry_study}, the latent distances learned by SCGFM exhibit a strong and statistically significant linear correlation with the true GW distances on NCI1 ($\rho = 0.7005$, $P < 10^{-4}$).
This quantitative result indicates that the latent space faithfully preserves the intrinsic geometry of graphs; that is, graph pairs with high topological similarity (low GW distance) are mapped to proximal points in the embedding space, while dissimilar graphs represent distant points.
This finding confirms that SCGFM produces an approximately isometric embedding, where Euclidean distances accurately reflect intrinsic graph similarities.

\section{Conclusion}
We introduce SCGFM, a structure-centric graph foundation model designed to overcome structural heterogeneity and feature mismatching in cross-domain graph learning.
By learning a set of geometric structural bases and leveraging Gromov-Wasserstein distances for projection, SCGFM projects heterogeneous graphs into a shared geometric space that cleanly decouples and aligns topology with semantics.
Extensive experiments show that SCGFM consistently surpasses state-of-the-art self-supervised methods and existing GFMs on few-shot node-and graph-level classification tasks across domains.
Beyond empirical gains, SCGFM demonstrates strong cross-domain generalization, robustness, and scalability, while providing theoretical guarantees on the stability of the structural coordinates and the reconstructed representations.

\section*{Impact Statement}
This paper presents work whose goal is to advance the field of Machine
Learning. There are many potential societal consequences of our work, none
which we feel must be specifically highlighted here.


\section*{Acknowledgements}
This work was supported by the National Natural Science Foundation of China (No. 62276053).

\bibliography{example_paper}

@article{memoli2011gromov,
  title={Gromov--Wasserstein distances and the metric approach to object matching},
  author={M{\'e}moli, Facundo},  
  journal={Foundations of Computational Mathematics},
  volume={11},
  number={4},
  pages={417--487},
  year={2011},
  publisher={Springer}
}

@article{peyre2016gromov,
  title={Gromov-Wasserstein averaging of kernel and distance matrices},
  author={Peyr{\'e}, Gabriel and Cuturi, Marco and Solomon, Justin},
  journal={International Conference on Machine Learning (ICML 2016)},
  series       = {{JMLR} Workshop and Conference Proceedings},
  volume       = {48},
  pages={2664--2672},
  year={2016},
  organization={PMLR}
}

@inproceedings{vayer2019sliced,
  title={Sliced {G}romov-{W}asserstein},  
  author={Vayer, Titouan and Flamary, R{\'e}mi and Tavenard, Romain and Chapel, Laetitia and Courty, Nicolas},
  booktitle={33rd Conference on Neural Information Processing Systems (NeurIPS 2019)},  
  year={2019},
  address={Vancouver, Canada},
  pages={10245--10255},
}

@book{Gromov1999Metric,
  title={Metric Structures for {R}iemannian and Non-{R}iemannian Spaces},
  author={{G}romov, {M}ikhail},
  translator={Bates, Sean Michael},
  publisher={Birkhäuser Boston},
  year={1999},
}

@inproceedings{velickovic2018deep,
  author       = {Petar Velickovic and
                  William Fedus and
                  William L. Hamilton and
                  Pietro Li{\`{o}} and
                  Yoshua Bengio and
                  R. Devon Hjelm},
  title        = {Deep Graph Infomax},
  booktitle    = {7th International Conference on Learning Representations (ICLR 2019)},
  address      = {New Orleans, LA, USA},
  publisher    = {OpenReview.net},
  year         = {2019},
}

@article{you2020graph,
title        = {Graph contrastive learning with augmentations},
author       = {You, Yuning and Chen, Tianlong and Sui, Yongduo and Chen, Ting and Wang, Zhangyang and Shen, Yang},
journal      = {Advances in Neural Information Processing Systems (NeurIPS 2020)},
volume       = {33},
pages        = {5812--5823},
year         = {2020},
}

@article{xiao2023simple,
title        = {Simple and asymmetric graph contrastive learning without augmentations},
author       = {Xiao, Teng and Zhu, Huaisheng and Chen, Zhengyu and Wang, Suhang},
journal      = {Advances in Neural Information Processing Systems (NeurIPS 2023)},
volume       = {36},
pages        = {16129--16152},
year         = {2023}
}

@inproceedings{hou2022graphmae,
title        = {Graph{MAE}: Self-Supervised Masked Graph Autoencoders},
author       = {Hou, Zhenyu and Liu, Xiao and Cen, Yukuo and Dong, Yuxiao and Yang, Hongxia and Wang, Chunjie and Tang, Jie},
booktitle    = {Proceedings of the 28th ACM SIGKDD Conference on Knowledge Discovery and Data Mining (KDD 2022)},
pages        = {594--604},
year         = {2022},
publisher    = {ACM},
eprint       = {2205.10803},
archivePrefix= {arXiv},
primaryClass = {cs.LG}
}

@inproceedings{hou2023graphmae2,
title        = {Graph{MAE2}: A decoding-enhanced masked self-supervised graph learner},
author       = {Hou, Zhenyu and He, Yufei and Cen, Yukuo and Liu, Xiao and Dong, Yuxiao and Kharlamov, Evgeny and Tang, Jie},
booktitle    = {Proceedings of the ACM Web Conference 2023 (WWW 2023)},
pages        = {737--746},
year         = {2023},
publisher    = {ACM},
}

@inproceedings{tan2023s2gae,
title        = {{S2GAE}: Self-supervised graph autoencoders are generalizable learners with graph masking},
author       = {Tan, Qiaoyu and Liu, Ninghao and Huang, Xiao and Choi, Soo-Hyun and Li, Li and Chen, Rui and Hu, Xia},
booktitle    = {Proceedings of the Sixteenth ACM International Conference on Web Search and Data Mining (WSDM 2023)},
pages        = {787--795},
year         = {2023},
publisher    = {ACM},
}

@inproceedings{liuone,
title        = {One For All: Towards Training One Graph Model For All Classification Tasks},
author       = {Liu, Hao and Feng, Jiarui and Kong, Lecheng and Liang, Ningyue and Tao, Dacheng and Chen, Yixin and Zhang, Muhan},
booktitle    = {International Conference on Learning Representations (ICLR 2024)},
year         = {2024},
}

@inproceedings{wangtowards,
  title        = {Towards Graph Foundation Models: Learning Generalities Across Graphs via Task-Trees},
  author       = {Wang, Zehong and Zhang, Zheyuan and Ma, Tianyi and Chawla, Nitesh V and Zhang, Chuxu and Ye, Yanfang},
  booktitle    = {Proceedings of the 42nd International Conference on Machine Learning (ICML 2025)},
  pages        = {65518--65555},
  year         = {2025},
  series       = {Proceedings of Machine Learning Research},
  volume       = {267},
  publisher    = {PMLR},
}

@inproceedings{wangmulti,
title        = {Multi-Domain Graph Foundation Models: Robust Knowledge Transfer via Topology Alignment},
author       = {Wang, Shuo and Wang, Bokui and Shen, Zhixiang and Deng, Boyan and others},
booktitle    = {Proceedings of the 42nd International Conference on Machine Learning (ICML 2025)},
pages        = {64806--64821},
year         = {2025},
series       = {Proceedings of Machine Learning Research},
volume       = {267},
publisher    = {PMLR},
}

@inproceedings{sun2025riemanngfm,
title        = {Riemann{GFM}: Learning a graph foundation model from riemannian geometry},
author       = {Sun, Li and Huang, Zhenhao and Zhou, Suyang and Wan, Qiqi and Peng, Hao and Yu, Philip},
booktitle    = {Proceedings of the ACM Web Conference 2025 (WWW 2025)},
pages        = {1154--1165},
year         = {2025},
publisher    = {ACM},
eprint       = {2502.03251},
archivePrefix= {arXiv},
primaryClass = {cs.LG}
}

@article{liu2023graphssl,
  title   = {Graph Self-Supervised Learning: A Survey},
  author  = {Liu, Yixin and Jin, Ming and Pan, Shirui and Zhou, Chuan and Zheng, Yu and Xia, Feng and Yu, Philip S.},
  journal = {IEEE Transactions on Knowledge and Data Engineering (TKDE 2023)},
  volume  = {35},
  number  = {6},
  pages   = {5879--5900},
  year    = {2023},
}

@article{xie2023sslgnn,
  title   = {Self-Supervised Learning of Graph Neural Networks: A Unified Review},
  author  = {Xie, Yaochen and Xu, Zhao and Zhang, Jingtun and Wang, Zhengyang and Ji, Shuiwang},
  journal = {IEEE Transactions on Pattern Analysis and Machine Intelligence (TPAMI 2023)},
  volume  = {45},
  number  = {2},
  pages   = {2412--2429},
  year    = {2023},
}

@article{liu2025gfm,
  title   = {Graph Foundation Models: Concepts, Opportunities and Challenges},
  author  = {Liu, Jiawei and Yang, Cheng and Lu, Zhiyuan and Chen, Junze and Li, Yibo and Zhang, Mengmei and Bai, Ting and Fang, Yuan and Sun, Lichao and Yu, Philip S. and Shi, Chuan},
  journal = {IEEE Transactions on Pattern Analysis and Machine Intelligence (TPAMI 2025)},
  volume  = {47},
  number  = {6},
  pages   = {5023--5044},
  year    = {2025},
}

@article{wang2025graph,
  title={Graph Foundation Models: A Comprehensive Survey},
  author={Wang, Zehong and Liu, Zheyuan and Ma, Tianyi and Li, Jiazheng and Zhang, Zheyuan and Fu, Xingbo and Li, Yiyang and Yuan, Zhengqing and Song, Wei and Ma, Yijun and others},
  journal={arXiv preprint arXiv:2505.15116},
  volume       = {abs/2502.08346},
  pages        = {6184--6194},
  year={2025}
}

@inproceedings{devlin-etal-2019-bert,
    title = "{BERT}: Pre-training of Deep Bidirectional Transformers for Language Understanding",
    author = "{D}evlin, {J}acob  and
      {C}hang, {M}ing-{W}ei  and
      {L}ee, {K}enton  and
      {T}outanova, {K}ristina",
    booktitle = "Proceedings of the 2019 Conference of the North {A}merican Chapter of the Association for Computational Linguistics: Human Language Technologies, Volume 1 (NAACL-HLT 2019)",
    year = "2019",
    pages = "4171--4186",
    address = "Minneapolis, Minnesota",
    publisher = "Association for Computational Linguistics",
}

@inproceedings{NEURIPS2020_1457c0d6,
 author = {{B}rown, {T}om and Mann, Benjamin and Ryder, Nick and Subbiah, Melanie and Kaplan, Jared D and Dhariwal, Prafulla and Neelakantan, Arvind and Shyam, Pranav and Sastry, Girish and Askell, Amanda and Agarwal, Sandhini and Herbert-Voss, Ariel and Krueger, Gretchen and Henighan, Tom and Child, Rewon and Ramesh, Aditya and Ziegler, Daniel and Wu, Jeffrey and Winter, Clemens and Hesse, Chris and Chen, Mark and Sigler, Eric and Litwin, Mateusz and Gray, Scott and Chess, Benjamin and Clark, Jack and Berner, Christopher and McCandlish, Sam and Radford, Alec and Sutskever, Ilya and Amodei, Dario},
 booktitle = {Advances in Neural Information Processing Systems (NeurIPS 2020)},
 pages = {1877--1901},
 publisher = {Curran Associates, Inc.},
 title = {Language Models are Few-Shot Learners},
 volume = {33},
 year = {2020}
}

@inproceedings{dosovitskiy2021image,
  title={An Image is Worth 16x16 Words: Transformers for Image Recognition at Scale},
  author={Dosovitskiy, Alexey and Beyer, Lucas and Kolesnikov, Alexander and Weissenborn, Dirk and Zhai, Xiaohua and Unterthiner, Thomas and Dehghani, Mostafa and Minderer, Matthias and Heigold, Georg and Gelly, Sylvain and others},
  booktitle={International Conference on Learning Representations (ICLR 2021)},
  year={2021}
}

@InProceedings{pmlr-v267-chen25cf,
  title = 	 {Hierarchical Graph Tokenization for Molecule-Language Alignment},
  author =       {Chen, Yongqiang and Yao, Quanming and Zhang, Juzheng and Cheng, James and Bian, Yatao},
  booktitle = 	 {Proceedings of the 42nd International Conference on Machine Learning (ICML 2025)},
  pages = 	 {9664--9690},
  year = 	 {2025},
  volume = 	 {267},
  series = 	 {Proceedings of Machine Learning Research},
  month = 	 {13--19 Jul},
  publisher =    {PMLR},
}

@inproceedings{he2022mae,
  author    = {He, Kaiming and Chen, Xinlei and Xie, Saining and
               Li, Yanghao and Doll{\'a}r, Piotr and Girshick, Ross},
  title     = {Masked Autoencoders Are Scalable Vision Learners},
  booktitle = {Proceedings of the IEEE/CVF Conference on Computer Vision and Pattern Recognition (CVPR 2022)},
  year      = {2022}
}

@inproceedings{ye-etal-2024-language,
    title = "Language is All a Graph Needs",
    author = "Ye, Ruosong  and
      Zhang, Caiqi  and
      Wang, Runhui  and
      Xu, Shuyuan  and
      Zhang, Yongfeng",
    booktitle = "Findings of the Association for Computational Linguistics: EACL 2024",
    year = "2024",
    address = "St. Julian{'}s, Malta",
    publisher = "Association for Computational Linguistics",
    pages = "1955--1973",
}

@inproceedings{NEURIPS2023_622afc4e,
 author = {Wang, Heng and Feng, Shangbin and He, Tianxing and Tan, Zhaoxuan and Han, Xiaochuang and Tsvetkov, Yulia},
 booktitle = {Advances in Neural Information Processing Systems (NeurIPS 2023)},
 pages = {30840--30861},
 publisher = {Curran Associates, Inc.},
 title = {Can Language Models Solve Graph Problems in Natural Language?},
 volume = {36},
 year = {2023}
}

@inproceedings{zhao2023gimlet,
 author = {Zhao, Haiteng and Liu, Shengchao and Chang, Ma and Xu, Hannan and Fu, Jie and Deng, Zhihong and Kong, Lingpeng and Liu, Qi},
 booktitle = {Advances in Neural Information Processing Systems (NeurIPS 2023)},
 pages = {5850--5887},
 publisher = {Curran Associates, Inc.},
 title = {{GIMLET}: A Unified Graph-Text Model for Instruction-Based Molecule Zero-Shot Learning},
 volume = {36},
 year = {2023}
}

@inproceedings{yuan2025how,
  title={How Much Can Transfer? {BRIDGE}: Bounded Multi-Domain Graph Foundation Model with Generalization Guarantees},
  author={Haonan Yuan and Qingyun Sun and Junhua Shi and Xingcheng Fu and Bryan Hooi and Jianxin Li and Philip S. Yu},
  booktitle={Proceedings of the Forty-second International Conference on Machine Learning (ICML 2025)},
  year={2025},
}

@article{Bruna2013SpectralNA,
  title={Spectral Networks and Locally Connected Networks on Graphs},
  author={Joan Bruna and Wojciech Zaremba and Arthur Szlam and Yann LeCun},
  journal={CoRR},
  year={2013},
  volume={abs/1312.6203},
}

@inproceedings{Hu*2020Strategies,
title={Strategies for Pre-training Graph Neural Networks},
author={Weihua Hu and Bowen Liu and Joseph Gomes and Marinka Zitnik and Percy Liang and Vijay Pande and Jure Leskovec},
booktitle={8th International Conference on Learning Representations (ICLR 2020)},
year={2020},
address = {Addis Ababa, Ethiopia}
}

@inproceedings{NEURIPS2022_5d4834a1,
 author = {Ramp\'{a}\v{s}ek, Ladislav and Galkin, Michael and Dwivedi, Vijay Prakash and Luu, Anh Tuan and Wolf, Guy and Beaini, Dominique},
 booktitle = {Advances in Neural Information Processing Systems (NeurIPS 2022)},
 pages = {14501--14515},
 publisher = {Curran Associates, Inc.},
 title = {Recipe for a General, Powerful, Scalable Graph Transformer},
 volume = {35},
 year = {2022}
}

@inproceedings{10.5555/3618408.3620116,
author = {Zeng, Zhichen and Zhu, Ruike and Xia, Yinglong and Zeng, Hanqing and Tong, Hanghang},
title = {Generative graph dictionary learning},
year = {2023},
booktitle = {Proceedings of the 40th International Conference on Machine Learning (ICML 2023)},
pages={40749--40769},
organization={PMLR},
location = {Honolulu, Hawaii, USA},
}

@InProceedings{pmlr-v139-vincent-cuaz21a,
  title = 	 {Online Graph Dictionary Learning},
  author =       {Vincent-Cuaz, C{\'e}dric and Vayer, Titouan and Flamary, R{\'e}mi and Corneli, Marco and Courty, Nicolas},
  booktitle = 	 {Proceedings of the 38th International Conference on Machine Learning (ICML 2021)},
  pages = 	 {10564--10574},
  year = 	 {2021},
  volume = 	 {139},
  series = 	 {Proceedings of Machine Learning Research},
  month = 	 {18--24 Jul},
  publisher =    {PMLR},
}

@article{gao2017properties,
  title={On the properties of the softmax function with application in game theory and reinforcement learning},
  author={Gao, Bolin and Pavel, Lilianna},
  journal={arXiv preprint arXiv:1704.00805},
  year={2017}
}

@inproceedings{kipf2017gcn,
  title={Semi-Supervised Classification with Graph Convolutional Networks},
  author={Thomas N. Kipf and Max Welling},
  booktitle={5th International Conference on Learning Representations (ICLR 2017)},
  year={2017},
  address={Toulon, France},
}

@inproceedings{xu2018gin,
  title={How Powerful are Graph Neural Networks?},
  author={Keyulu Xu and Weihua Hu and Jure Leskovec and Stefanie Jegelka},
  booktitle={7th International Conference on Learning Representations (ICLR 2019)},
  address={New Orleans, LA, USA},
  year={2019},
}

@inproceedings{veličković2018gat,
title={Graph Attention Networks},
author={Petar Veličković and Guillem Cucurull and Arantxa Casanova and Adriana Romero and Pietro Liò and Yoshua Bengio},
booktitle={6th International Conference on Learning Representations (ICLR 2018)},
year={2018},
}

@inproceedings{sun2024gcope,
author = {Sun, Xiangguo and Cheng, Hong and Li, Jia and Liu, Bo and Guan, Jihong},
title = {All in one: multi-task prompting for graph neural networks},
year = {2024},
booktitle = {Proceedings of the Thirty-Third International Joint Conference on Artificial Intelligence (IJCAI 2024)},
pages        = {8460--8465},
address = {Jeju, South Korea},
publisher    = {ijcai.org},
location = {Jeju, Korea},
}

@InProceedings{pmlr-v267-wang25ez,
  title = 	 {Equivalence is All: A Unified View for Self-supervised Graph Learning},
  author =       {Wang, Yejiang and Zhao, Yuhai and Wang, Zhengkui and Li, Ling and Wang, Jiapu and Li, Fangting and Huang, Miaomiao and Pan, Shirui and Wang, Xingwei},
  booktitle = 	 {Proceedings of the 42nd International Conference on Machine Learning (ICML 2025)},
  pages = 	 {65776--65789},
  year = 	 {2025},
  address = {Vancouver, BC, Canada},
  month = 	 {13--19 Jul},
  publisher =    {OpenReview.net},
}

@InProceedings{pmlr-v267-li25ct,
  title = 	 {Self-supervised Masked Graph Autoencoder via Structure-aware Curriculum},
  author =       {Li, Haoyang and Wang, Xin and Zhang, Zeyang and Wu, Zongyuan and Xiao, Linxin and Zhu, Wenwu},
  booktitle = 	 {Proceedings of the 42nd International Conference on Machine Learning (ICML 2025)},
  pages = 	 {36215--36235},
  year = 	 {2025},
  volume = 	 {267},
  series = 	 {Proceedings of Machine Learning Research},
  month = 	 {13--19 Jul},
  publisher =    {PMLR},
}

@inproceedings{yuan2026rag,
  title={RAG-GFM: Overcoming In-Memory Bottlenecks in Graph Foundation Models via Retrieval-Augmented Generation},
  author={Yuan, Haonan and Sun, Qingyun and Tao, Jiacheng and Fu, Xingcheng and Li, Jianxin},
  booktitle={Proceedings of the ACM Web Conference 2026},
  pages={626--637},
  year={2026}
}

@inproceedings{fangsaga,
  title={SAGA: Structural Aggregation Guided Alignment with Dynamic View and Neighborhood Order Selection for Multiview Graph Domain Adaptation},
  author={Fang, Ruiyi and Zhao, Jingyu and Wang, Shuo and Pu, Ruizhi and Li, Bingheng and Cai, Jiale and Li, Zhihao and Jing, Zihao and Zhu, Jian and Tang, Song and others},
  booktitle={The Fourteenth International Conference on Learning Representations},
  year={2025}
}

@inproceedings{fang2026graph,
  title={Graph domain adaptation via homophily-agnostic reconstructing structure},
  author={Fang, Ruiyi and Wang, Shuo and Pu, Ruizhi and Zeng, Qiuhao and Zheng, Hao and Wang, Ziyan and Cai, Jiale and Mei, Zhimin and Tang, Song and Ling, Charles and others},
  booktitle={Proceedings of the AAAI Conference on Artificial Intelligence},
  volume={40},
  number={25},
  pages={21047--21055},
  year={2026}
}

@inproceedings{wang2026subtle,
  title={From Subtle to Significant: Prompt-Driven Self-Improving Optimization in Test-Time Graph OOD Detection},
  author={Wang, Luzhi and Fu, Xuanshuo and Zhang, He and Liu, Chuang and Wang, Xiaobao and Liu, Hongbo},
  booktitle={Proceedings of the AAAI Conference on Artificial Intelligence},
  volume={40},
  number={19},
  pages={15851--15859},
  year={2026}
}

@article{Yu2025SAMGPTTG,
  title={SAMGPT: Text-free Graph Foundation Model for Multi-domain Pre-training and Cross-domain Adaptation},
  author={Xingtong Yu and Zechuan Gong and Chang Zhou and Yuan Fang and Hui Zhang},
  journal={Proceedings of the ACM on Web Conference 2025},
  year={2025},
}

@inproceedings{NEURIPS2025_0c3ce12a,
 author = {Wang, Xingliang and Liu, Zemin and Han, Junxiao and Deng, Shuiguang},
 booktitle = {Advances in Neural Information Processing Systems},
 editor = {D. Belgrave and C. Zhang and H. Lin and R. Pascanu and P. Koniusz and M. Ghassemi and N. Chen},
 pages = {8252--8276},
 publisher = {Curran Associates, Inc.},
 title = {RAG4GFM: Bridging Knowledge Gaps in Graph Foundation Models through Graph Retrieval Augmented Generation},
 volume = {38},
 year = {2025}
}

@InProceedings{pmlr-v97-titouan19a,
  title = 	 {Optimal Transport for structured data with application on graphs},
  author =       {Titouan, Vayer and Courty, Nicolas and Tavenard, Romain and Laetitia, Chapel and Flamary, R{\'e}mi},
  booktitle = 	 {Proceedings of the 36th International Conference on Machine Learning},
  pages = 	 {6275--6284},
  year = 	 {2019},
  volume = 	 {97},
  series = 	 {Proceedings of Machine Learning Research},
  month = 	 {09--15 Jun},
  publisher =    {PMLR},
}

@InProceedings{pmlr-v202-chen23ak,
  title = 	 {A Gromov-{W}asserstein Geometric View of Spectrum-Preserving Graph Coarsening},
  author =       {Chen, Yifan and Yao, Rentian and Yang, Yun and Chen, Jie},
  booktitle = 	 {Proceedings of the 40th International Conference on Machine Learning},
  pages = 	 {5257--5281},
  year = 	 {2023},
  volume = 	 {202},
  series = 	 {Proceedings of Machine Learning Research},
  month = 	 {23--29 Jul},
  publisher =    {PMLR},
}

@InProceedings{pmlr-v202-pan23b,
  title = 	 {Beyond Homophily: Reconstructing Structure for Graph-agnostic Clustering},
  author =       {Pan, Erlin and Kang, Zhao},
  booktitle = 	 {Proceedings of the 40th International Conference on Machine Learning},
  pages = 	 {26868--26877},
  year = 	 {2023},
  volume = 	 {202},
  series = 	 {Proceedings of Machine Learning Research},
  month = 	 {23--29 Jul},
  publisher =    {PMLR},
}

@InProceedings{pmlr-v267-wang25an,
  title = 	 {Cooperation of Experts: Fusing Heterogeneous Information with Large Margin},
  author =       {Wang, Shuo and Huang, Shunyang and Yuan, Jinghui and Shen, Zhixiang and Kang, Zhao},
  booktitle = 	 {Proceedings of the 42nd International Conference on Machine Learning},
  pages = 	 {63169--63185},
  year = 	 {2025},
  volume = 	 {267},
  series = 	 {Proceedings of Machine Learning Research},
  month = 	 {13--19 Jul},
  publisher =    {PMLR},
}

@inproceedings{NEURIPS2024_380a0b16,
 author = {Shen, Zhixiang and Wang, Shuo and Kang, Zhao},
 booktitle = {Advances in Neural Information Processing Systems},
 doi = {10.52202/079017-0993},
 editor = {A. Globerson and L. Mackey and D. Belgrave and A. Fan and U. Paquet and J. Tomczak and C. Zhang},
 pages = {31629--31658},
 publisher = {Curran Associates, Inc.},
 title = {Beyond Redundancy: Information-aware Unsupervised Multiplex Graph Structure Learning},
 volume = {37},
 year = {2024}
}

@ARTICLE{shenwhen,
  author={Shen, Zhixiang and Kang, Zhao},
  journal={IEEE Transactions on Neural Networks and Learning Systems}, 
  title={When Heterophily Meets Heterogeneous Graphs: Latent Graphs Guided Unsupervised Representation Learning}, 
  year={2025},
  volume={36},
  number={6},
  pages={10283-10296},
  }

@inproceedings{fang2025benefits,
  title={On the benefits of attribute-driven graph domain adaptation},
  author={Fang, Ruiyi and Li, Bingheng and Zeng, Qiuhao and Hosseini Dashtbayaz, Nima and Pu, Ruizhi and Ling, Charles and Wang, Boyu and others},
  booktitle={International conference on learning representations},
  volume={2025},
  pages={30762--30782},
  year={2025}
}

@inproceedings{li2024pc,
  title={Pc-conv: Unifying homophily and heterophily with two-fold filtering},
  author={Li, Bingheng and Pan, Erlin and Kang, Zhao},
  booktitle={Proceedings of the AAAI conference on artificial intelligence},
  volume={38},
  number={12},
  pages={13437--13445},
  year={2024}
}

@inproceedings{10.1145/3690624.3709277,
author = {He, Yufei and Sui, Yuan and He, Xiaoxin and Hooi, Bryan},
title = {Uni{G}raph: Learning a Unified Cross-Domain Foundation Model for Text-Attributed Graphs},
year = {2025},
publisher = {Association for Computing Machinery},
address = {New York, NY, USA},
booktitle = {Proceedings of the 31st ACM SIGKDD Conference on Knowledge Discovery and Data Mining V.1 (KDD 2025)},
pages = {448–459},
numpages = {12},
location = {Toronto ON, Canada},
series = {KDD 2025}
}

@inproceedings{Zeng2020GraphSAINT,
  title={Graph{SAINT}: Graph Sampling Based Inductive Learning Method},
  author={Hanqing Zeng and Hongkuan Zhou and Ajitesh Srivastava and Rajgopal Kannan and Viktor Prasanna},
  booktitle={8th International Conference on Learning Representations (ICLR 2020)},
  address = {Addis Ababa, Ethiopia},
  year={2020},
}

@article{cover2006geometrical,
  author       = {Thomas M. Cover},
  title        = {Geometrical and Statistical Properties of Systems of Linear Inequalities with Applications in Pattern Recognition},
  journal      = {{IEEE} Trans. Electron. Comput.},
  volume       = {14},
  number       = {3},
  pages        = {326--334},
  year         = {1965},
}

@article{sutherland2003spline,
  title={Spline-fitting with a genetic algorithm: a method for developing classification structure-activity relationships},
  author={Sutherland, J. J. and O'Brien, L. A. and Weaver, D. F.},
  journal={Journal of Chemical Information and Computer Sciences},
  volume={43},
  number={6},
  pages={1906--1915},
  year={2003},
  publisher={ACS Publications},
  pmid={14632439}
}

@article{wale2008comparison,
  title={Comparison of descriptor spaces for chemical compound retrieval and classification},
  author={Wale, N. and Watson, I. A. and Karypis, G.},
  journal={Knowledge and Information Systems},
  volume={14},
  pages={347--375},
  year={2008},
}

@article{debNath1991structure,
  title={Structure-activity relationship of mutagenic aromatic and heteroaromatic nitro compounds. Correlation with molecular orbital energies and hydrophobicity},
  author={Debnath, A. K. and Lopez de Compadre, R. L. and Debnath, G. and Shusterman, A. J. and Hansch, C.},
  journal={Journal of Medicinal Chemistry},
  volume={34},
  number={2},
  pages={786--97},
  year={1991},
  publisher={ACS Publications},
  pmid={1995902}
}

@inproceedings{10.1145/2783258.2783417,
  author = {Yanardag, Pinar and Vishwanathan, S.V.N.},
  title = {Deep Graph Kernels},
  year = {2015},
  publisher = {Association for Computing Machinery},
  address = {New York, NY, USA},
  booktitle = {Proceedings of the 21th ACM SIGKDD International Conference on Knowledge Discovery and Data Mining (KDD 2015)},
  pages = {1365-1374},
  numpages = {10},
  location = {Sydney, NSW, Australia},
  series = {KDD 2015}
}

@inbook{10.5555/3454287.3454665,
  author = {Knyazev, Boris and Taylor, Graham W. and Amer, Mohamed R.},
  title = {Understanding attention and generalization in graph neural networks},
  year = {2019},
  publisher = {Curran Associates Inc.},
  address = {Vancouver, BC, Canada},
  booktitle = {Proceedings of the 33rd International Conference on Neural Information Processing Systems (NeurIPS 2019)},
  pages        = {4204--4214},
}

@article{10.1093/bioinformatics/bti1007,
  author = {Borgwardt, Karsten M. and Ong, Cheng Soon and Sch\"{o}nauer, Stefan and Vishwanathan, S. V. N. and Smola, Alex J. and Kriegel, Hans-Peter},
  title = {Protein function prediction via graph kernels},
  year = {2005},
  issue_date = {January 2005},
  publisher = {Oxford University Press, Inc.},
  address = {USA},
  volume = {21},
  number = {1},
  journal = {Bioinformatics},
  month = jan,
  pages = {47-56},
}

@InProceedings{pmlr-v48-yanga16,
  title = 	 {Revisiting Semi-Supervised Learning with Graph Embeddings},
  author = 	 {Yang, Zhilin and Cohen, William and Salakhudinov, Ruslan},
  booktitle = 	 {Proceedings of The 33rd International Conference on Machine Learning (ICML 2016)},
  pages = 	 {40--48},
  year = 	 {2016},
  volume = 	 {48},
  series = 	 {Proceedings of Machine Learning Research},
  address = 	 {New York, New York, USA},
  publisher =    {PMLR},
}

@article{Shchur2018PitfallsOG,
  title={Pitfalls of Graph Neural Network Evaluation},
  author={Oleksandr Shchur and Maximilian Mumme and Aleksandar Bojchevski and Stephan G{\"u}nnemann},
  journal={ArXiv},
  year={2018},
  volume={abs/1811.05868},
}

@inproceedings{10.5555/3294771.3294869,
  author = {Hamilton, William L. and Ying, Rex and Leskovec, Jure},
  title = {Inductive representation learning on large graphs},
  year = {2017},
  publisher = {Curran Associates Inc.},
  address = {Red Hook, NY, USA},
  booktitle = {Proceedings of the 31st International Conference on Neural Information Processing Systems (NeurIPS 2017)},
  pages = {1024-1034},
  numpages = {11},
  location = {Long Beach, California, USA},
  series = {NIPS 2017}
}

@article{Bronstein2021GeometricDL,
  title={Geometric Deep Learning: Grids, Groups, Graphs, Geodesics, and Gauges},
  author={Michael M. Bronstein and Joan Bruna and Taco Cohen and Petar Velivckovi'c},
  journal={ArXiv},
  year={2021},
  volume={abs/2104.13478},
}

@inproceedings{pmlr-v80-you18a,
  author       = {Jiaxuan You and
                  Rex Ying and
                  Xiang Ren and
                  William L. Hamilton and
                  Jure Leskovec},
  title        = {Graph{RNN}: Generating Realistic Graphs with Deep Auto-regressive Models},
  booktitle    = {Proceedings of the 35th International Conference on Machine Learning (ICML 2018)},
  series       = {Proceedings of Machine Learning Research},
  volume       = {80},
  pages        = {5694--5703},
  publisher    = {{PMLR}},
  year         = {2018},
  address      = {Stockholmsm{\"{a}}ssan, Stockholm, Sweden},
}

@inproceedings{pmlr-v80-jin18a,
  author       = {Wengong Jin and
                  Regina Barzilay and
                  Tommi S. Jaakkola},
  title        = {Junction Tree Variational Autoencoder for Molecular Graph Generation},
  booktitle    = {Proceedings of the 35th International Conference on Machine Learning (ICML 2018)},
  series       = {Proceedings of Machine Learning Research},
  address      = {Stockholmsm{\"{a}}ssan, Stockholm, Sweden},
  volume       = {80},
  pages        = {2328--2337},
  publisher    = {{PMLR}},
  year         = {2018},
}

@inproceedings{kornblith2019similarity,
  title     = {Similarity of Neural Network Representations Revisited},
  author    = {Kornblith, Simon and Norouzi, Mohammad and Lee, Honglak and Hinton, Geoffrey E.},
  booktitle = {Proceedings of the 36th International Conference on Machine Learning (ICML 2019)},
  pages={3519--3529},
  organization={PMlR},
  year      = {2019}
}

@inproceedings{ICLR2025_f2059277,
 author = {Wang, Limei and Hassani, Kaveh and Zhang, Si and Fu, Dongqi and Yuan, Baichuan and Cong, Weilin and Hua, Zhigang and Wu, Hao and Yao, Ning and Long, Bo},
 booktitle = {International Conference on Learning Representations (ICLR 2025)},
 pages = {97239--97260},
 title = {Learning Graph Quantized Tokenizers},
 volume = {2025},
 year = {2025}
}

@article{maaten2008visualizing,
  title={Visualizing data using t-SNE},
  author={Maaten, Laurens van der and Hinton, Geoffrey},
  journal={Journal of machine learning research},
  volume={9},
  number={Nov},
  pages={2579--2605},
  year={2008}
}
\bibliographystyle{icml2026}

\newpage
\appendix
\onecolumn
\section{Notations}
\label{sec:appendix_notion}
\begin{table}[h]
  \caption{Summary of Notations.}
  \begin{center}
    \begin{small}
      \begin{tabular}{ll}
        \toprule
        \textbf{Symbol} & \textbf{Description} \\
        \midrule
        $G = (\mathcal{V}, \mathcal{E},\mathbf{A},\mathbf{X})$ & An undirected graph with node set $\mathcal{V}$, edge set $\mathcal{E}$. \\
                                                                &Adjacency matrix $\mathbf{A}$ and feature matrix $\mathbf{X}$.\\
        $|\mathcal{V}|$, $N$          & Number of nodes in graph $G$. \\
        $|\mathcal{E}|$               & Number of edges in graph $G$. \\
        $\mathbf{A} \in \mathbb{R}^{N \times N}$        & Adjacency matrix of graph $G$. \\
        $\mathbf{X} \in \mathbb{R}^{N\times F}$ & Node feature matrix.  \\
        $F$ & Dimension of node feature.\\
        $\mathcal{G}$ & mm-space representation of input graph. $G$ \\
        $d_G$ & Metric of the graph $\mathcal{G}$. \\
        $\mu_G$ & Probability measure of the graph mm-space  $\mathcal{G}$. \\
        $\mathbf{q}$ &R Coordinate representation of G \\
        \midrule
        $K$ &Number of geometric bases. \\
        $\mathbf{B}_k \in \mathbb{R}^{M \times M}$ &k-th geometric base matrix.\\
        $B_k$ & mm-space representation of the $k$-th geometric base.\\ 
        $M$ & Number of nodes in geometric base. \\
        $d_k$ &The metric component of the $k$-th geometric base $B_k$. \\
        $\mu_k$ & The probability measure associated with the $k$-th geometric base $B_k$. \\
        $\mathbf{w} $ &Weight vector base on structural similarity between input graph $G$ and geometric bases. \\
        $\mathbf{T}_{ik} \in \mathbb{R}^{N\times M}$ & Optimal transport plans between graph $G_i$ and geometric base $\mathbf{B}_k$. \\
        $L$ & Number of slices in SGW computation. \\
        $\tau$ & Temperature parameter.   \\
        \midrule
        $d_{GW} (G, G')$ & GW distance between graph $G$ and $G'$. \\
        $\eta$ & Upperbound of GW distance. \\
        $\boldsymbol{\delta} \in \mathbb{R}^K$ &Consists of the GW distances from graph $G$ to the set of fixed geometric bases \\
        $\mathcal{X}$ & Space of mm-spaces equipped with the GW distance. \\
        $\mathcal{K}$ & Totally bounded subset of $\mathcal{X}$. \\
        $\mathcal{L}_{gw}$ &Loss about GW distance between graph $G$ and linear surrogate graph.\\
        $\mathcal{L}_{rec}$ & Loss of feature reconstruction. \\
        $\mathcal{L}_{div}$ & Loss of structural diversity. \\
        $m$                 & Diversity margin.        \\
        $f(\cdot)$ & 2-layer MLP decoder. \\
        $\theta$ &Decoder Parameters.   \\
        $r$ & Output Dimension of $f(\cdot)$. \\
        $L_w$ & Lipschitz constant for structural coordinates $\mathbf{w}$. \\
        $L_{sm}$ & Lipschitz constant of softmax function 
        $L_{f}$ Lipschitz constant of decoder $f(\cdot)$. \\
        $\text{FE}(\cdot)$ &Feature-extraction operator computing graph-level statistics. \\
        $\mathcal{P}$ &Represents the set of all unique pairs of bases. \\
        $\alpha$ &trade-off hyperparameter about $\mathcal{L}_{rec}$. \\
        $\beta$ &trade-off hyperparameter about $\mathcal{L}_{div}$. \\
        $\mathbf{H}_k$ &k-th hidden representation. \\
        $\mathbf{H}(G_i)$ & Projected feature matrix of graph $G_i$. \\
        $\mathbf{z}(G_i)$ & Final graph embedding. \\ 
        \bottomrule
      \end{tabular}
    \end{small}
  \end{center}
  \vskip -0.1in
\end{table}

\section{Geometric Bases Pre-training Pipeline} \label{sec:appendix_algo}
The pre-training pipeline of SCGFM is illustrated in Algorithm~\ref{alg:pretraining}.

\section{Complexity Analysis of SCGFM.}
\label{app:complexity}
We provide a more detailed analysis of SCGFM below.

\textbf{Training Complexity.}
The main compuatational bottleneck lies in metric alignment during pre-training.
We adopt \textbf{Sliced Gromov-Wasserstein (SGW)}, which reduces the dominant cost to 
\[
O\bigl(KL(N\log N + M\log M)\bigr),
\]
where $N$ and $M$ are the numbers of nodes in the input graph and geometric base, respectively, $K$ is the number of bases, and $L$ is the number of slices.
Since $K$, $L$, and $M$ are small parameters independent of graph size, the resulting scaling is \textbf{near-linear} in practice.

\textbf{Inference Complexity.}
The inference stage consists of three part:
\begin{itemize}
  \item \textbf{Structural coordinate $\mathbf{w}$ computation} via SGW: $O\bigl(KL(N\log N + M\log M)\bigr)$
  \item \textbf{Nonlinear projection} $f(\mathbf{w})$: a lightweight MLP with negligible cost $O(K^2)$.
  \item \textbf{Feature projection} $\mathbf{H}$ via entropic GW: $O(NM \cdot iter)$, where $iter$ is the number of maximum iterations in entropic GW.
\end{itemize}

Therefore, the overall inference complexity is 
\[
O\bigl(KL(N\log N + M\log M) + K^2 + NM\cdot iter\bigr).
\]

\textbf{Memory Complexity.}
SCGFM stores only a compact set of geometric bases (e.g., $16 \times 20 \times 20$), and memory usage is dominated by the input graph adjacency plus the intermediate sliced projections.
As a result, it achieves an overall memory complexity of $O(N + |E|)$. 
As further empirical evidence, the scalability experiment in Figure~\ref{fig:scalability_comparison} demonstrates that SCGFM remains stable even under an extreme stress test with \textbf{5M nodes}, reaching a peak memory usage of approximately \textbf{12 GB}. Notably, SCGFM is the only method among the compared baselines that successfully completes the 5M-node setting.

Overall, both the theoretical analysis and the stress tests confirm that SCGFM achieves favorable scalability, owing to the SGW approximation and its compact geometric base design.

\begin{algorithm}[t]
  \caption{Pre-training pipeline of SCGFM}
  \label{alg:pretraining}
  \begin{algorithmic}[1]
    \STATE {\bfseries Input:} Graph dataset $\mathcal{D} = \{G_1, \dots, G_{N}\}$, geometric bases number $K$ with $M$ nodes, slicing number $L$. Hyperparameters $\tau, \alpha, \beta$.
    \STATE {\bfseries Output:} Pre-trained geometric bases $\mathcal{B} = \{\mathbf{B}\}_{k=1}^K$ and decoder $f_{\theta}(\cdot)$.
    \STATE Initialize bases matrix $\mathbf{B}_k \in \mathbb{R}^{M \times M}$ and decoder parameters $\theta$ randomly;
    \STATE Pre-compute feature statistics $\text{FE}(G_i)$ for all $G_i \in \mathcal{D}$;
    
    \FOR{epoch $e=1, 2, \ldots, E$}
      \FOR{each mini-batch batch $\mathcal{D}_t \subset \mathcal{D}$}
        \STATE \# 1. Metric Alignment (via Sliced GW)
        \STATE Calculate SGW discrepancy for each $G_i \in \mathcal{D}_t$: $\delta_k \leftarrow \text{SGW}(\mathbf{A}_i, \mathbf{B}_k)$ via Eq.~\eqref{eq:gw_distance};
        \STATE Compute structural weights: $w_k \leftarrow \text{softmax}(-\delta_k / \tau)$ via Eq.~\eqref{eq:weights};
        
        \STATE \# 2. Surrogate Construction \& Alignment Loss
        \STATE Construct linear surrogate bases: $\tilde{\mathbf{B}}(G) \leftarrow \sum_{k=1}^K w_k \mathbf{B}_k$ via Eq.~\eqref{eq:surrogate_B};
        \STATE Calculate GW alignment loss: $\mathcal{L}_{\textbf{gw}} \leftarrow \text{SGW}(\mathbf{A}_i, \tilde{\mathcal{B}}(G))$ via Eq.~\eqref{eq:gw_loss};
        
        \STATE \# 3. Feature Reconstruction
        \STATE Decode weights to statistics: $\hat{\mathbf{s}} \leftarrow f_{\theta}(\mathbf{w})$;
        \STATE Calculate reconstruction loss: $\mathcal{L}_{\text{rec}} \leftarrow \| \text{FE}(G) - \hat{\mathbf{s}} \|^2_2$ via Eq.~\eqref{eq:rec_loss};
        
        \STATE \# 4. Optimization with Diversity
        \STATE Calculate diversity regularization: $\mathcal{L}_{\text{div}} \leftarrow \text{PairwiseDist}(\mathcal{B})$ via Eq.~\eqref{eq:div_loss};
        \STATE Total objective: $\mathcal{L}_{\text{total}} \leftarrow \mathcal{L}_{\text{gw}} + \alpha \mathcal{L}_{\text{rec}} + \beta \mathcal{L}_{\text{div}}$;
        \STATE Update $\mathcal{B}$ and $\theta$ by minimizing $\mathcal{L}_{\text{total}}$;
      \ENDFOR
    \ENDFOR
  \end{algorithmic}
\end{algorithm}

\subsection{Scalability with Respect to Geometric Bases}

We further discuss the computational bottlenecks of SCGFM when the number of geometric bases or the basis size increases.
As reported in the main paper, SCGFM shows strong scalability in both the scalability analysis and the sensitivity studies.
Here we clarify this issue from theoretical and empirical perspectives.

First, the dominant training cost scales as
\begin{equation}
O\bigl(KL(N\log N + M\log M)\bigr),
\end{equation}
where $K$ denotes the number of geometric bases, $M$ denotes the basis size, $N$ denotes the input graph size, and $L$ denotes the number of sliced projections.
This indicates that increasing either $K$ or $M$ introduces approximately linear overhead in practice.

Second, the geometric bases are designed to serve as compact structural abstractions rather than large graph templates.
When $M \ll N$, each basis acts as a compressed geometric primitive or motif that summarizes representative topological patterns.
Increasing $M$ beyond a moderate range weakens this compression effect and may introduce redundant high-dimensional templates.

Third, our empirical results show that performance saturates once $K$ and $M$ reach modest values.
To further validate this observation, we conduct a stress study on COLLAB by increasing either the number of bases or the bases size.
The results are reported in Table~\ref{tab:base_scalability}.

\begin{table}[h]
\centering
\caption{Scalability study of SCGFM with different numbers of bases $K$ and basis sizes $M$ on COLLAB.}
\label{tab:base_scalability}
  \begin{tabular}{l c c c c}
    \hline
    Setting & Variant & Time/Epoch (s) & Memory (GB) & 5-shot Acc. (\%) \\
    \hline
    \shortstack{Default batch size $=128$}
    & $K=16, M=32$ & 5.12 $\pm$ 0.66 & 0.23 & 66.40 $\pm$ 6.33 \\
    \hline
    \shortstack{Scale $K$ (fix $M=32$)}
    & $K=32$ & 5.77 $\pm$ 0.38 & 0.34 & 66.12 $\pm$ 6.12 \\
    & $K=64$ & 5.76 $\pm$ 0.35 & 0.61 & 63.83 $\pm$ 6.17 \\
    \hline
    \shortstack{Scale $M$ (fix $K=16$)}
    & $M=64$ & 4.85 $\pm$ 0.11 & 0.25 & 65.68 $\pm$ 5.07 \\
    & $M=128$ & 4.80 $\pm$ 0.12 & 0.37 & 64.53 $\pm$ 4.69 \\
    \hline
  \end{tabular}
\end{table}

The results show that increasing the bases size to approximately four times the default configuration only introduces modest memory overhead, while the accuracy improvement is negligible or slightly negative.
Similarly, increasing the number of bases does not consistently improve performance.
These observations are consistent with the saturation trend reported in the main paper, suggesting that a compact geometric dictionary is sufficient to capture the structural diversity needed for effective transfer.

\section{Datasets} \label{datasets}
Detailed datasets is illustrated in Table~\ref{tab:data}
\begin{table}[ht]
  \caption{Statistics of Multi-Domain Graph Datasets. NONE indicates the absence of input node features.}
  \label{tab:data}
  \centering
  \resizebox{\textwidth}{!}{
    \begin{small}
        \begin{tabular}{lccccccc}
          \toprule
          \textbf{Dataset} & \textbf{Domain} & \textbf{Graphs} & \textbf{\# Nodes} & \textbf{\# Edges} & \multicolumn{1}{c}{\textbf{\# Feature}} & \textbf{\# Classes} \\
          & & &\textbf{(or AVG.)} &\textbf{or AVG.} & \textbf{Dimensions} & \\
          \midrule
          COX2~\cite{sutherland2003spline} & Molecules &467 &41.22 &43.45 &3 & 2 \\
          NCI1~\cite{wale2008comparison} & Molecules &4110 & 29.87 &32.30 & NONE& 2 \\
          BZR~\cite{sutherland2003spline}  & Molecules &405 &35.75 &38.36 & 3& 2 \\
          MUTAG~\cite{debNath1991structure} & Molecules &188 & 17.93 & 19.79&NONE &2   \\
          \midrule
          COLLAB~\cite{10.1145/2783258.2783417} & Social Networks &5000 &74.49 &2457.78 &NONE &3  \\
          IMDB-BINARY~\cite{10.1145/2783258.2783417} & Social Networks & 1000&19.77 & 96.53& NONE&  2\\
          \midrule
          COLORS-3~\cite{10.5555/3454287.3454665} & Synthetic & 10500&61.31 &91.03 &4 &11\\
          \midrule
          PROTEINS~\cite{10.1093/bioinformatics/bti1007} & Bioinformatics & 1113&39.06 & 72.82& 1& 2 \\
          \midrule
          Cora~\cite{pmlr-v48-yanga16} & Academic & 1 & 2708 & 10,556 & 1,433 & 7  \\
          CiteSeer~\cite{pmlr-v48-yanga16} & Academic & 1 & 3,327 & 9104 & 3,703 & 6 \\
          PubMed~\cite{pmlr-v48-yanga16} & Academic & 1 & 19,717 & 88,648 & 500 & 3  \\
          \midrule
          Photo~\cite{Shchur2018PitfallsOG} & E-Commerce & 1 & 7,650 & 238,162 & 745 & 8  \\
          Computers~\cite{Shchur2018PitfallsOG} & E-Commerce & 1 & 13,752 & 491,722 & 767 & 10 \\
          \midrule
          Reddit~\cite{10.5555/3294771.3294869} & Social Networks & 1 & 232,965 & 114,615,892 & 602 & 41  \\
          \bottomrule
        \end{tabular}
    \end{small}
  }
  \vskip -0.1in
\end{table}

\section{Experimental Setting Details} \label{ap:experiment_setup}

\subsection{Few-shot Learning Detailed Settings.}

In the few-shot pretraining stage, all self-supervised models with configurable backbones adopt a
two-layer GCN as the encoder to ensure architectural consistency across different methods.
During the downstream few-shot evaluation, the pretrained encoder is frozen, and no further fine-tuning is performed.
Instead, a prototypical network is employed as the sole classifier.
This design choice allows us to isolate the quality of the representations learned during
pretraining, eliminating the influence of task-specific adaptation.

All downstream few-shot tasks follow a unified evaluation protocol with \textbf{5-shot support sets,
50 query samples per class, and results averaged over 50 independent runs}.
Such a strict few-shot setting is intentionally adopted to emphasize the transferability of
cross-domain knowledge learned during pretraining, rather than the model's capacity to adapt
through gradient-based updates. Moreover, using an identical evaluation protocol across all
methods ensures a fair and reproducible comparison under limited supervision.

For node classification tasks, we transform the problem into a graph-level few-shot setting by constructing \textbf{ego-centric subgraphs using Personalized PageRank (PPR) sampling}.
Specifically, for each target node, we sample a subgraph containing up to 100 nodes, and assign the label of the central node as the label of the corresponding subgraph.
Importantly, the PPR-based subgraph sampling is performed \emph{once} to construct a fixed dataset, which is shared by all compared methods.
By ensuring that all models are trained and evaluated on exactly the same sampled subgraphs, we eliminate potential performance variations caused by stochastic sampling, thereby guaranteeing a fair and controlled comparison across different approaches.

\subsection{Hyperparameters Setting In Few-shot Experiments}
For the node classification task, we use a fixed set of hyperparameters across all experiments. Specifically, the number of geometric bases $K$ is set to $16$, $M$ to 32, and the number of layers to 50. The learning rate is 0.01, with temperature $\tau=0.3$, and trade-off coefficients $\alpha=2$ and $\beta =0.05$.
We set the diversity margin to 10, train the model for 60 epochs, and use a batch size of 32.
The random seed was fixed to 42 across all experiments.

For graph classification tasks on COX2, NCI1, BZR, COLLAB, IMDB-BINARY, and COLORS-3, most hyperparameters follow the same configuration as the node classification task. The main differences lie in $M$, which is set to 30 for COX2, NCI1, BZR, and COLORS-3, 20 for COLLAB, and 6 for IMDB-BINARY.
In addition, the diversity margin is adjusted to 8 for COLLAB and 3 for IMDB-BINARY, while all other hyperparameters remain unchanged.

\subsection{Hardware and Software}
We conduct all experiments on the following configurations:
\begin{itemize}
  \item \textbf{Operating System}: Ubuntu 22.04 LTS.
  \item \textbf{CPU}: 13th Gen Intel(R) Core(TM) i9-13900K.
  \item \textbf{GPU}:  NVIDIA GeForce RTX 3090 with 24GB of memory.
  \item \textbf{Software}: CUDA 12.1, Python 3.10.19, Pytorch\footnote{\url{https://github.com/pytorch/pytorch}} 2.5.1, Pytorch Geometric\footnote{\url{https://github.com/pyg-team/pytorch_geometric}} 2.7.0
\end{itemize}

\section{Additional Experiments}

\subsection{Hyperparameter Sensitivity Analysis}
\label{ap:hyperparameter}

To evaluate the stability of SCGFM, we conducted extensive sensitivity analysis on five key hyperparameters using the PROTEINS dataset under the 2-way 5-shot setting.
The results, illustrated in Figure~\ref{fig:hyperparams}, demonstrate that our method maintains high performance across a wide range of configurations.

\begin{figure}[ht]
    \centering
    \includegraphics[width=1.0\linewidth]{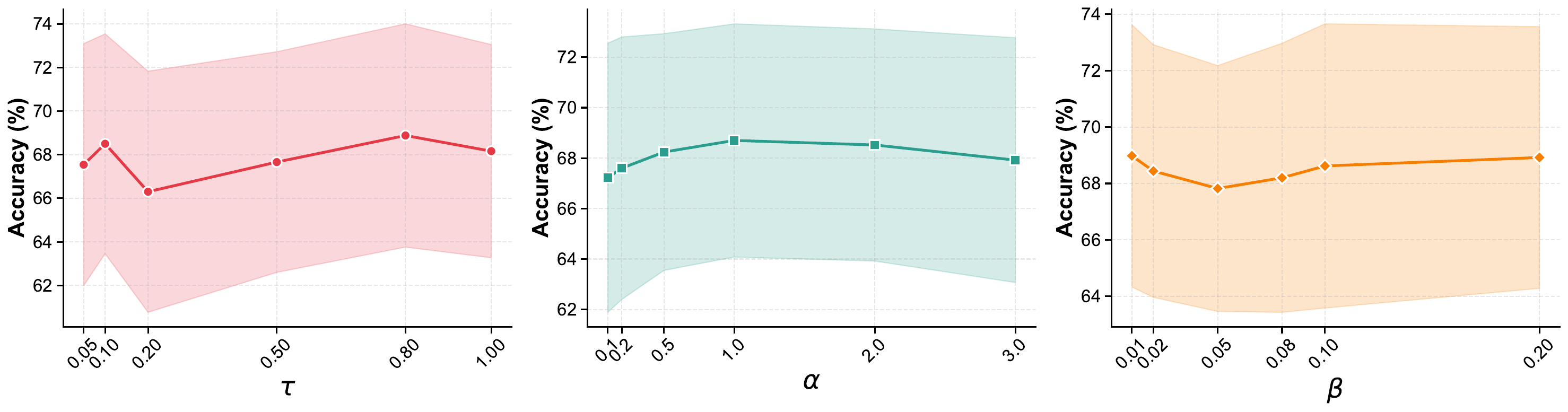}
    \caption{Sensitivity analysis of key hyperparameters on PROTEINS classification accuracy. The model exhibits strong robustness, with performance remaining stable across broad ranges of parameter choices.}
    \label{fig:hyperparams}
\end{figure}

\textbf{Optimization \& Regularization ($\tau, \alpha, \beta$)}:
\begin{itemize}
    \item \textbf{Temperature ($\tau$):} This parameter controls the sharpness of the attention weights in the bases matching step (Eq. \ref{eq:weights}).
    The plot indicates an optimal range around $\tau \in [0.5, 0.8]$.
    Extreme sharpness ($\tau \to 0$) or excessive smoothness ($\tau \to 1.0$) slightly degrades performance, but the overall variance is contained within $2\%$, indicating the key matching mechanism is robust.
    
    \item \textbf{Loss Coefficients ($\alpha$ and $\beta$):} We denote the weighting coefficients for reconstruction loss and diversity regularization as $\alpha$ and $\beta$, respectively.
    \begin{itemize}
        \item $\alpha$ shows a "sweet spot" around $1.0$, balancing the structural separation (GW loss) and graph-level statistics reconstruction.
        \item $\beta$ (diversity weight) exhibits a very flat trend, maintaining accuracy between $68\%$ and $69\%$ across orders of magnitude ($0.01$ to $0.2$).
        This confirms that while diversity is helpful for preventing mode collapse, the model is not hypersensitive to the exact strength of this regularization.
    \end{itemize}
\end{itemize}

\paragraph{Conclusion.} 
Overall, SCGFM demonstrates \textbf{strong robustness}.
The method does not require meticulous hyperparameter tuning to achieve SOTA-level performance, making it highly practical for diverse real-world graph tasks.

\subsection{Additional Ablation Study} \label{ap:add_ab_ge}
In this section, we provide a granular analysis of the ablation study regarding the final graph representation $\mathbf{z}(G)$.
According to Eq.~\ref{eq:graph_embedding}, the representation is composed of three distinct parts: the geometric coordinate vector $\mathbf{w}$ (derived from OT-based projection), its non-linear projection $f(\mathbf{w})$, and the \textbf{structure-aligned feature representation} $\text{vec}(\mathbf{H})$ (derived from the structure-aware re-encoding in Eq.~\ref{eq:project_h}).
Figure~\ref{fig:detailed_ablation} reveal a fundamental dichotomy in graph representation learning, demonstrating how SCGFM effectively disentangles geometry from semantics.

\begin{figure}[ht]
    \centering
    \includegraphics[width=1.0\linewidth]{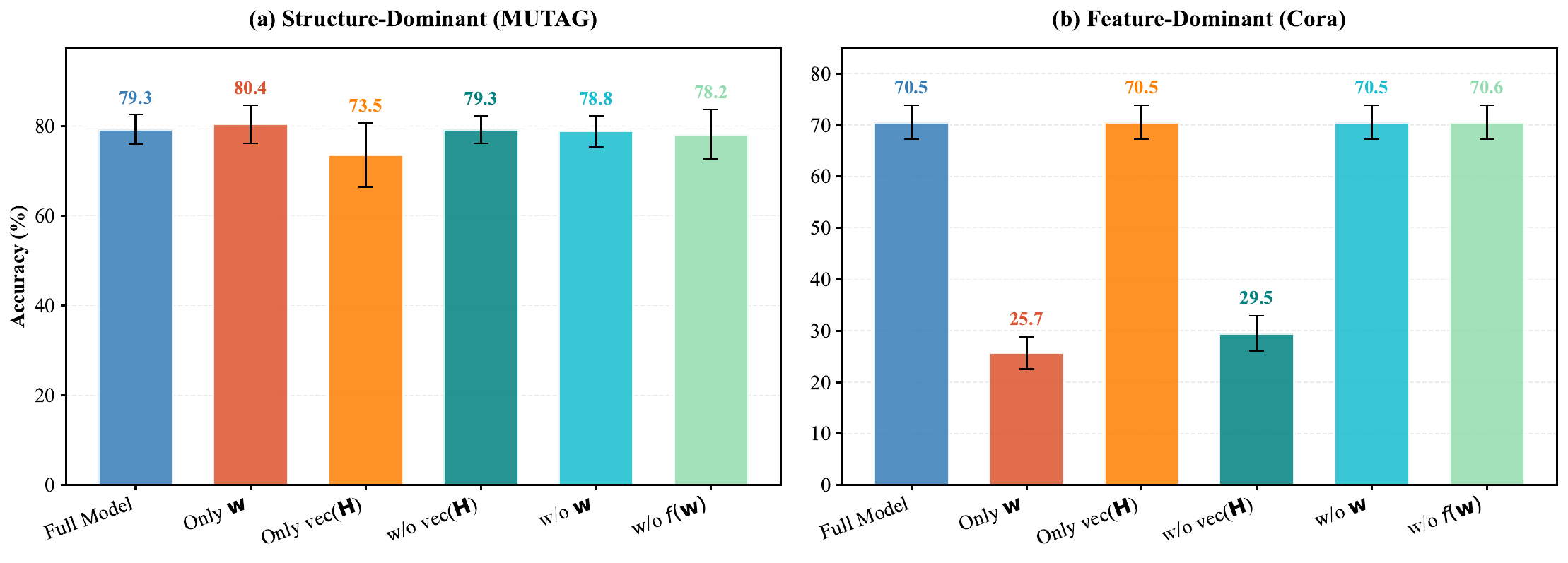}
    \caption{Ablation study on the impact of structural and semantic components.
    We evaluate the model variants on two representative datasets: (a) MUTAG (structure-dominant) and (b) Cora (feature-dominant).
    $\mathbf{w}$ denotes the learnable structure-related parameter, and $\operatorname{vec}(\mathbf{H}(G))$ denotes the semantic feature vectors.
    Standard deviation is indicated by error bars.
    The results show that the SCGFM effectively leverages the dominant modality in each scenario, whereas decoupling the components reveals the dataset-specific reliance on structure versus semantics.}
    \label{fig:detailed_ablation}
\end{figure}

\subsubsection{The "Feature-Dominant" Regime: The Case of Cora}
Cora is a citation network where classification relies heavily on document content (node features).
\begin{itemize}
    \item \textbf{Failure of Pure Geometry ($\mathbf{w}$):}
    As observed in the ``Only $\mathbf{w}$'' variant, relying solely on geometric coefficients results in a catastrophic performance drop (Accuracy $\approx 25\%$).
    This is expected because $\mathbf{w}$ only captures the topological similarity to the bases.
    Since the citation graphs of different topics often share isomorphic structures (e.g., star-like patterns), the geometric coordinates alone cannot distinguish between classes without semantic content.
    \item \textbf{Dominance of Aligned Features ($\text{vec}(\mathbf{H})$):}
    The ``Only $\text{vec}(\mathbf{H})$'' variant achieves $70.55\%$, matching the Full model.
    Here, $\text{vec}(\mathbf{H})$ represents the input node features (Bag-of-Words) aggregated onto the geometric bases via the transport plan $\mathbf{T}$.
    This result confirms that preserving the feature content is paramount for citation networks, even after projecting them into the shared latent space.
    \item \textbf{Role of SCGFM:}
    The Full model successfully identifies that the semantic signal in $\text{vec}(\mathbf{H})$ is the primary discriminator, while implicitly suppressing the noise from the ambiguous geometric signal $\mathbf{w}$ for this specific task.
\end{itemize}

\subsubsection{The "Structure-Dominant" Regime: The Case of MUTAG}
MUTAG is a molecular dataset where the task is to predict mutagenicity, a property inherently tied to molecular topology (e.g., rings, connectivity).
\begin{itemize}
    \item \textbf{Limitation of Pure Features:}
    The ``Only $\text{vec}(\mathbf{H})$'' variant achieves $73.50\%$.
    Although these features are structurally aligned, they primarily encode atom types (e.g., C, N, O) mapped to bases.
    In molecular tasks, the \textit{existence} of a specific pattern (captured by $\mathbf{w}$) is often more critical than the \textit{content} at that pattern (captured by $\text{vec}(\mathbf{H})$).
    \item \textbf{Superiority of Geometric Coordinates ($\mathbf{w}$):}
    The ``Only $\mathbf{w}$'' variant achieves a remarkable $80.42\%$, significantly outperforming the feature-only baseline.
    This implies that the transport cost vector $\mathbf{w}$—which measures how much effort is needed to morph the input graph into each geometric base—acts as a powerful topological descriptor (fingerprint) for molecular properties.
\end{itemize}

\subsubsection{The Role of Non-Linear Projection $f(\mathbf{w})$}
A natural question arises regarding the necessity of the projection head $f(\cdot)$ in Eq.~\ref{eq:graph_embedding}, given that the raw coordinates $\mathbf{w}$ already contain strong structural signals.
Based on Table~\ref{fig:detailed_ablation}, we analyze its function:
\begin{itemize}
    \item \textbf{Latent Space Adaptation:}
    The geometric bases count $K$ is typically small (e.g., $K=16$) to ensure orthogonality, whereas the semantic vector $\text{vec}(\mathbf{H})$ usually has a much higher dimensionality.
    Simply concatenating a low-dim $\mathbf{w}$ with a high-dim $\text{vec}(\mathbf{H})$ would cause the geometric signal to be overwhelmed.
    $f(\cdot)$ lifts $\mathbf{w}$ to a compatible dimension ($r$), balancing the contribution of geometry and semantics in the final embedding $\mathbf{z}(G)$.
    \item \textbf{Non-Linear Refinement:}
    While $\mathbf{w}$ represents linear transport costs, $f(\mathbf{w})$ (parameterized as an MLP) captures non-linear interactions between these costs.
    Comparing the \textit{Full Model} ($79.30\%$) with \textit{w/o $f(\mathbf{w})$} ($78.18\%$) on MUTAG, we observe a consistent performance gain ($+1.12\%$).
    This indicates that the non-linear transformation extracts specific geometric combinations critical for classification that are not present in the raw coordinates.
    \item \textbf{Redundancy vs. Robustness:}
    Interestingly, on some datasets, $\mathbf{w}$ and $f(\mathbf{w})$ appear redundant (e.g., purely structural performance is similar).
    However, retaining $f(\mathbf{w})$ ensures the model's robustness.
    It acts as a buffer that allows the network to learn deeper geometric semantics when raw coordinates are insufficient, without hurting performance in simpler cases.
\end{itemize}

\subsubsection{Conclusion: Geometric and Semantic Complementarity}
The study validates the design of Eq.~\ref{eq:graph_embedding}:
\begin{enumerate}
    \item $\mathbf{w}$ answers \textbf{``Where does the graph lie in the geometric manifold?''} (crucial for Structure-heavy tasks).
    \item $\text{vec}(\mathbf{H})$ answers \textbf{``What content does the graph carry at those coordinates?''} (crucial for Feature-heavy tasks).
\end{enumerate}
By concatenating them, SCGFM avoids the trade-off inherent in single-view models, achieving robust performance across both regimes.

\subsection{Versatility and Robustness of Representations across Domains}
\label{appendix:classifier_robustness}
To verify that the structural representations learned by SCGFM are versatile and not strictly tied to a specific metric-based classifier, we extend our evaluation to both \textbf{Graph Classification} (PROTEINS) and \textbf{Node Classification} (Cora). 

We freeze the SCGFM encoder and evaluate five distinct classifiers.
This setup tests whether the embeddings contain sufficient information to support different downstream decision mechanisms (metric-based vs. boundary-based).

\begin{table}[h]
\centering
\caption{Classifier performance comparison across domains.
  \textbf{Left:} Graph Classification on PROTEINS.
  \textbf{Right:} Node Classification on Cora (High-Dim features $\approx$ 28k).
  PN demonstrates consistent robustness.
  }
\label{tab:classifier_cross_domain}
  \begin{small}
  \begin{tabular}{lcccc}
    \toprule
    & \multicolumn{2}{c}{\textbf{PROTEINS (Graph)}} & \multicolumn{2}{c}{\textbf{Cora (Node, High-Dim)}} \\
    \cmidrule(lr){2-3} \cmidrule(lr){4-5}
    \textbf{Method} & \textbf{Accuracy (\%)} & \textbf{Std ($\pm$)} & \textbf{Accuracy (\%)} & \textbf{Std ($\pm$)} \\
    \midrule
    \textbf{PN} & \textbf{68.46} & 5.15 & \textbf{70.55} & 3.27 \\
    \textbf{Random Forest} & 66.08 & 5.87 & 70.37 & 3.45 \\
    \textbf{MLP} & 64.16 & 6.43 & 65.99 & 4.00 \\
    \textbf{$k$-NN} & 65.04 & 7.85 & 61.41 & 4.91 \\
    \textbf{SVM} & 64.60 & 5.74 & 59.40 & 5.02 \\
    \bottomrule
  \end{tabular}
  \end{small}
\end{table}

\paragraph{Analysis of Results.}
Table~\ref{tab:classifier_cross_domain} reveals two key insights regarding the quality of the learned representations:

\textbf{1. Robustness to High Dimensionality (PN).}
On the Cora dataset, where the feature dimension expands to approx. 28k, distance-sensitive methods like $k$-NN and SVM suffer from the curse of dimensionality ($61.41\%$ and $59.40\%$, respectively).
However, PN adapts well ($70.55\%$) by averaging support samples into prototypes, effectively suppressing high-dimensional noise.
This confirms that the SCGFM embeddings maintain a stable global geometry even in extreme high-dimensional settings.

\textbf{2. Versatility Beyond Metrics (Random Forest).}
Crucially, \textbf{Random Forest (RF)} achieves performance nearly identical to PN on Cora ($70.37\%$ vs. $70.55\%$).
Unlike PN, RF does not rely on Euclidean distance but on feature splitting.
This high performance proves that SCGFM embeddings are \textbf{discriminative independent of the distance metric}.
The encoder successfully disentangles complex structural information into individual feature dimensions, allowing non-metric classifiers to extract meaningful patterns easily.

\paragraph{Conclusion.}
The experiment confirms that SCGFM produces \textbf{task-agnostic and versatile embeddings}.
While PN is the most robust choice for few-shot metric learning tasks, the strong performance of Random Forest indicates that the learned representations are rich in semantic content and can be effectively utilized by a broad range of downstream applications, largely unaffected by classifier-specific constraints.

\subsection{Evaluation Beyond ProtoNet-based Classification}

To examine whether SCGFM learns generally useful graph representations beyond ProtoNet-based few-shot classification, we evaluate the frozen embeddings on two additional tasks: supervised adaptation with a lightweight MLP classifier and graph retrieval with $k$NN ranking.
The encoder is pretrained on PROTEINS and transferred to MUTAG and COX2.

\paragraph{Supervised adaptation.}
We freeze the pretrained encoder and train an MLP classifier with limited labeled data on the target dataset.
The results are shown in Table~\ref{tab:supervised_adaptation}.
SCGFM achieves competitive or better performance than standard GNN baselines (GCN, GIN, GAT) under the same low-label setting.

\begin{table}[h]
\centering
\caption{Supervised adaptation with a frozen encoder and an MLP classifier. The encoder is pretrained on PROTEINS and transferred to MUTAG and COX2.}
\label{tab:supervised_adaptation}
  \begin{tabular}{l c c c}
    \hline
    Dataset & Labels & SCGFM & Best GNN \\
    \hline
    MUTAG & 10\% & \textbf{83.78} & 81.58 (GIN) \\
    MUTAG & 20\% & \textbf{83.78} & 78.95 (GIN) \\
    COX2  & 10\% & 80.65 & \textbf{81.91} (GCN) \\
    COX2  & 20\% & \textbf{87.10} & 81.91 (GCN) \\
    \hline
  \end{tabular}
\end{table}

\paragraph{Graph retrieval.}
We further evaluate the learned embeddings in a graph retrieval setting.
Given a query graph, database graphs are ranked according to their embedding distances.
We report Precision@10, MAP@10, and $k$NN classification accuracy at $k=10$.
The results are presented in Table~\ref{tab:graph_retrieval}.
SCGFM consistently outperforms GCN, GAT, and GIN on both MUTAG and COX2, indicating that the learned geometric embedding space preserves meaningful neighborhood structure for retrieval.

\begin{table}[h]
\centering
\caption{Graph retrieval performance using frozen embeddings. Higher values indicate better performance.}
\label{tab:graph_retrieval}
  \begin{tabular}{l l c c c}
    \hline
    Dataset & Model & P@10 $\uparrow$ & MAP@10 $\uparrow$ & kNN-Acc@10 $\uparrow$ \\
    \hline
    MUTAG & SCGFM & \textbf{0.7840} & \textbf{0.7162} & \textbf{0.8351} \\
          & GIN   & 0.7356 & 0.6659 & 0.7979 \\
          & GCN   & 0.6686 & 0.5822 & 0.7394 \\
          & GAT   & 0.6543 & 0.5511 & 0.7340 \\
    \hline
    COX2  & SCGFM & \textbf{0.7015} & \textbf{0.6068} & \textbf{0.7923} \\
          & GCN   & 0.6754 & 0.5703 & 0.7880 \\
          & GAT   & 0.6739 & 0.5676 & 0.7794 \\
          & GIN   & 0.6666 & 0.5516 & 0.7730 \\
    \hline
  \end{tabular}
\end{table}

These results show that SCGFM embeddings are not limited to ProtoNet-based classification, but also support supervised adaptation and retrieval.

\subsection{Additional Scalability Analysis}
As illustrated in Figure~\ref{fig:scalability_comparison}, we conducted comparative experiments on synthetic datasets with a fixed batch of 1,000 graphs, while varying the number of nodes per graph.
SCGFM demonstrates superior stability as it scales to massive graphs.
The raw data are provided below:
\begin{table}[h]
\centering
  \caption{Runtime comparison on synthetic graphs with increasing graph sizes.}
  \label{tab:runtime_scalability}
  \begin{tabular}{c c c c c c c}
    \hline
    Node Number & Average Edges & GAT & GIN & GIT & RiemannGFM & SCGFM \\
    \hline
    0.1M & 494.67 & 0.12s & 0.10s & 0.21s & 0.24s & 0.84s \\
    0.5M & 12468.16 & 0.70s & 0.19s & 1.15s & 0.54s & 0.89s \\
    1.0M & 49954.51 & 2.61s & 0.61s & 3.73s & 1.46s & 1.47s \\
    3.0M & 449841.264 & OOM & 4.81s & OOM & 8.46s & 7.69s \\
    5.0M & 1249773.292 & OOM & OOM & OOM & OOM & \textbf{19.96s} \\
    \hline
  \end{tabular}
\end{table}

These results indicate that SCGFM incurs a slightly higher constant overhead on small graphs due to metric alignment.
However, its runtime scales much more gracefully with increasing graph size, and it is the only method that remains executable on 5.0M nodes within a 24 GB VRAM budget.
In contrast, all other advanced baselines encounter out-of-memory failures before reaching this scale.

\subsection{Additional Cross-Domain GFM Baselines}

To further evaluate the cross-domain transferability of SCGFM, we include three recent GFM or graph pretraining baselines that are designed for cross-domain or heterogeneous graph scenarios: SAMGPT, BRIDGE, and MDGFM.
All methods are evaluated under the same 5-shot cross-domain graph classification setting.

Since these methods adopt different downstream adaptation strategies, we report two complementary settings for fair comparison.
The \emph{native} setting preserves each method's original target-side prompt or adaptation mechanism whenever supported by the released code.
The \emph{aligned} setting freezes the pretrained encoder as much as possible, minimizes target-side adaptation, and evaluates all methods using ProtoNet under the same fixed-encoder protocol as SCGFM.

\begin{table}[h]
\centering
\small
\caption{Additional 5-shot cross-domain graph classification results with recent GFM and graph pretraining baselines.}
\label{tab:additional_gfm_baselines}
  \begin{tabular}{l c c}
    \hline
    Model & BZR $\rightarrow$ PROTEINS (\%) & PROTEINS $\rightarrow$ BZR (\%) \\
    \hline
    BRIDGE (native) & 51.06 $\pm$ 5.80 & 56.38 $\pm$ 6.89 \\
    SAMGPT (native) & 51.36 $\pm$ 5.52 & 57.90 $\pm$ 7.47 \\
    MDGFM (native) & 57.06 $\pm$ 7.70 & 53.00 $\pm$ 6.02 \\
    BRIDGE (aligned) & 49.94 $\pm$ 0.42 & 51.24 $\pm$ 5.94 \\
    SAMGPT (aligned) & 50.16 $\pm$ 0.89 & 53.12 $\pm$ 7.08 \\
    MDGFM (aligned) & 54.04 $\pm$ 6.78 & 51.94 $\pm$ 5.30 \\
    SCGFM & \textbf{68.32 $\pm$ 4.65} & \textbf{58.60 $\pm$ 6.75} \\
    \hline
  \end{tabular}
\end{table}

As shown in Table~\ref{tab:additional_gfm_baselines}, SCGFM achieves the best performance in both transfer directions.
It also outperforms the strongest native baseline, demonstrating its stronger robustness under heterogeneous cross-domain transfer.

\subsection{Baseline Selection and Evaluation Protocol}

We clarify the rationale behind our baseline selection and evaluation protocol.
Recent GFMs such as OFA, SAMGPT, BRIDGE, MDGFM, and RAG4GFM are important related methods, but many of them follow an \emph{interface-centric} paradigm, where prompts, instructions, retrieval modules, or task-specific target-side adaptation are used to bridge different graph domains.
In contrast, SCGFM is designed as a \emph{structure-centric} model: it constructs a shared geometric coordinate system directly from graph topology and transfers through structure-aligned representations, without relying on textual semantics or prompt-based interfaces.

This distinction also affects the evaluation protocol.
In the main paper, we evaluate SCGFM under a strict \emph{fixed-encoder few-shot} setting, where the pretrained model is frozen and only a lightweight downstream classifier is used.
This protocol is intended to measure the intrinsic transferability and representation quality of the pretrained encoder.
Prompt-based GFMs, however, are often designed to exploit target-side prompts, adapters, or retrieval-based augmentation during transfer.
Directly comparing their native target-adaptation pipelines with a frozen SCGFM encoder would therefore conflate representation quality with additional adaptation mechanisms.

To address this issue, we distinguish between two complementary settings in the supplementary comparison (Table~\ref{tab:additional_gfm_baselines}).
These comparisons complement the main results and show that SCGFM remains competitive, and in most cases superior, under both native and protocol-aligned cross-domain evaluation settings.

\subsection{Interpretability and Base Specialization}

We further analyze how the interpretability and specialization of geometric bases change with larger basis size $M$, larger number of bases $K$, and more diverse pretraining domains.
We provide quantitative evidence from three perspectives.

\paragraph{Base specialization across diverse domains.}
We jointly pretrain SCGFM on PROTEINS from bioinformatics and IMDB-BINARY from social networks using $K=8$ bases, without providing any domain labels.
We then compute the average activation weight $w_k$ of each base on graphs from the two domains.
The results are summarized in Table~\ref{tab:domain_specialization}.

\begin{table}[h]
\centering
\caption{Domain-dependent activation of learned geometric bases. The uniform activation weight is $1/8=0.125$.}
\label{tab:domain_specialization}
  \begin{tabular}{l c c c}
    \hline
    Domain & $B_7$ Bio. & $B_6$ Soc. & $B_2/B_8$ Shared \\
    \hline
    PROTEINS & \textbf{0.1803} & 0.1265 & $\sim$0.178 \\
    IMDB-BINARY & 0.1023 & \textbf{0.1719} & $\sim$0.166 \\
    Shift & 1.76$\times$ higher & 1.36$\times$ higher & Domain-agnostic \\
    \hline
  \end{tabular}
\end{table}

The domain-specific shifts are statistically significant $p<10^{-6}$.
Bases $B_7$ and $B_6$ show clear domain preferences, corresponding respectively to bioinformatics-related and social-network-related structural motifs.
In contrast, $B_2$ and $B_8$ maintain consistently high activation in both domains, suggesting that they capture universal geometric primitives shared across domains.
These results indicate that SCGFM can naturally disentangle domain-specific motifs from domain-agnostic topological patterns without supervision.

\paragraph{Scaling the base size $M$.}
We also examine how the topology of learned bases changes as the base size $M$ increases.
Table~\ref{tab:base_size_interpretability} reports the average degree, number of connected components, and qualitative topology focus of the learned bases.

\begin{table}[h]
\centering
\caption{Topological complexity of learned bases under different base sizes $M$.}
\label{tab:base_size_interpretability}
  \begin{tabular}{c c c l}
    \hline
    $M$ & Avg. Degree & Avg. Comp. & Topology Focus \\
    \hline
    8  & 2.61 & 1.62 & Simple primitives \\
    16 & 4.71 & 1.06 & Connected subgraphs \\
    24 & 6.60 & 1.04 & Complex motifs \\
    \hline
  \end{tabular}
\end{table}

As $M$ increases, the learned bases exhibit higher structural resolution.
Small bases mainly capture simple primitives such as edges and star-like patterns.
With larger $M$, the bases gradually become more connected and encode more complex multi-hop motifs, such as clique-like or fused-ring structures.
This suggests that increasing $M$ improves the expressive capacity of individual bases.

\paragraph{Scaling the number of bases $K$.}
We further study how the roles of bases differentiate as $K$ increases.
For each test graph, we identify its top-1 base, i.e., the base with the largest activation weight $w_k$.
We then count how many distinct bases are selected as top-1 by at least one graph.
The results are shown in Table~\ref{tab:base_number_interpretability}.

\begin{table}[h]
\centering
\caption{Role differentiation of bases as the number of bases $K$ increases.}
\label{tab:base_number_interpretability}
  \begin{tabular}{c c c}
    \hline
    $K$ & Unique Top-1 / $K$ & Usage Ratio \\
    \hline
    4  & 3 / 4   & 75\% \\
    8  & 4 / 8   & 50\% \\
    16 & 9 / 16  & 56\% \\
    24 & 8 / 24  & 33\% \\
    \hline
  \end{tabular}
\end{table}

As $K$ increases from 4 to 24, the fraction of bases serving as the primary prototype decreases from 75\% to 33\%.
This indicates that the learned vocabulary spontaneously differentiates into two groups: a small set of core prototypes that dominate primary graph matching, and a larger set of auxiliary bases that provide finer structural refinements through soft combinations.
This unsupervised role differentiation improves interpretability, since practitioners can inspect the core prototypes to identify dominant topological motifs while using auxiliary bases to explain fine-grained variations.

\paragraph{Connection to downstream performance.}
The interpretability trends are consistent with the performance sensitivity results in Figure~\ref{fig:hyperparameter} of the main paper.
Downstream accuracy remains stable, and in some cases improves, as $M$ and $K$ increase within the studied range.
This suggests that greater base specialization improves representation quality rather than merely introducing redundant bases.

\section{Discussion}

\subsection{Representation Components and Structure-conditioned Feature Projection}
\label{app:component_analysis}

SCGFM uses a combined representation
\[
    Z(G) = [\mathbf{w}, f(\mathbf{w}), \mathrm{vec}(\mathbf{H})],
\]
where $\mathbf{w}$ denotes the structural coordinates, $f(\mathbf{w})$ denotes the decoder output, and $\mathrm{vec}(\mathbf{H})$ denotes the vectorized structure-conditioned feature representation.
This design allows the representation to adapt to both structure-dominant and feature-dominant regimes.

Importantly, $\mathrm{vec}(\mathbf{H})$ is not an arbitrary high-dimensional lift of node features.
It is conditioned on graph structure through the OT coupling $\mathbf{T}$ in Eq.~\ref{eq:project_h}, which aligns the input graph with the learned geometric bases.
Thus, $\mathrm{vec}(\mathbf{H})$ reflects the feature information after structural alignment.

To verify this, we conduct a diagnostic experiment on 1000 Cora ego-graphs.
We fix the node feature matrix $\mathbf{X}$ and perturb only the graph topology $\mathbf{A}$ by rewiring edges with probability $\epsilon$.
We then measure the cosine distance of the representation before and after rewiring.
As a same-dimensional structure-agnostic control, we use \texttt{pool\_tiled}, which preserves dimensionality but removes OT-based structural conditioning.
The results are shown in Table~\ref{tab:rewire_diagnostic}.

\begin{table}[h]
\centering
\small
\caption{Diagnostic study with fixed node features and rewired graph structure on 1000 Cora ego-graphs. $\Delta$ denotes cosine distance before and after rewiring.}
\label{tab:rewire_diagnostic}
  \begin{tabular}{c c c c}
    \hline
    Rewire $\epsilon$
    & $\Delta \mathrm{vec}(\mathbf{H})$ Full
    & $\Delta \mathrm{vec}(\mathbf{H})$ \texttt{pool\_tiled}
    & $\Delta \mathbf{w}$ Full \\
    \hline
    0.00
    & $1.0{\times}10^{-8} \pm 7.0{\times}10^{-8}$
    & $1.0{\times}10^{-8} \pm 8.0{\times}10^{-8}$
    & $\sim 0$ \\
    0.30
    & $1.86{\times}10^{-2} \pm 3.25{\times}10^{-2}$
    & $1.0{\times}10^{-8} \pm 8.0{\times}10^{-8}$
    & $7.9{\times}10^{-7} \pm 8.9{\times}10^{-7}$ \\
    0.70
    & $5.26{\times}10^{-2} \pm 7.04{\times}10^{-2}$
    & $1.0{\times}10^{-8} \pm 8.0{\times}10^{-8}$
    & $2.2{\times}10^{-6} \pm 2.3{\times}10^{-6}$ \\
    \hline
  \end{tabular}
\end{table}

The full model's $\mathrm{vec}(\mathbf{H})$ changes monotonically with structural perturbation, whereas the same-dimensional structure-agnostic control remains nearly unchanged.
This confirms that the contribution of $\mathrm{vec}(\mathbf{H})$ comes from structure-conditioned alignment rather than dimensionality alone.

\subsection{Support-set Criterion for Component Selection}
\label{app:component_selection}

Different datasets may benefit from different representation components.
To provide a practical criterion for component selection, we compute the within-class scatter $S_w$ and between-class scatter $S_b$ of each component on the few-shot support set using the frozen encoder.
We define a Fisher-style separability ratio as
\begin{equation}
    R = \frac{S_b}{S_w}.
\end{equation}
A larger $R$ indicates stronger class separability for the corresponding component.

Table~\ref{tab:fisher_ratio} reports the ratios of $\mathbf{w}$ and $\mathrm{vec}(\mathbf{H})$ on Cora, PROTEINS, and MUTAG.

\begin{table}[h]
\centering
\caption{Fisher-style separability ratio $R=S_b/S_w$ for different representation components.}
\label{tab:fisher_ratio}
  \begin{tabular}{l c c}
    \hline
    Dataset & $R$ of $\mathbf{w}$ / $\mathrm{vec}(\mathbf{H})$ & Dominant Component \\
    \hline
    Cora     & 58.1 / \textbf{231.4} & $\mathrm{vec}(\mathbf{H})$ \\
    PROTEINS & \textbf{304.3} / 71.7 & $\mathbf{w}$ \\
    MUTAG    & \textbf{143.5} / 105.6 & $\mathbf{w}$ \\
    \hline
  \end{tabular}
\end{table}

This provides a simple practical rule: for a new dataset, the component with the larger support-set ratio $R$ can be assigned higher importance.
The criterion also explains the observed behavior across datasets: Cora benefits more from the structure-conditioned feature representation, whereas PROTEINS and MUTAG benefit more from the structural coordinate representation.

\subsection{Quality of the SGW Approximation}
\label{app:sgw_approximation}

SCGFM adopts the Sliced Gromov--Wasserstein distance to approximate the exact GW distance for scalable training.
To evaluate the approximation quality, we compute exact GW and SGW distances on 500 graph pairs from PROTEINS.
The Pearson correlation between the two distances is
\[
    \rho = 0.7576, \qquad p < 10^{-90}.
\]
This is consistent with the NCI1 result in Figure~\ref{fig:isometry_study}, where the correlation is $\rho=0.7005$.

These results indicate that SGW preserves the relative geometric ordering of graph pairs while reducing the computational cost from approximately $O(N^3)$ for exact GW to $O(N\log N)$ for SGW-based computation.
This reduction is crucial for scalable pretraining across diverse graph datasets.

\subsection{Effect of the Linear Reconstruction Surrogate}
\label{app:linear_surrogate}

Exact GW barycenter computation requires nested OT iterations, which are computationally expensive and difficult to integrate into end-to-end gradient-based training.
SCGFM therefore uses a linear reconstruction surrogate as defined in Eq.~\ref{eq:surrogate_B}, enabling efficient differentiable reconstruction from structural coordinates.

We conduct a diagnostic experiment on PROTEINS over 200 epochs.
Table~\ref{tab:linear_surrogate} compares the exact barycenter reconstruction and the linear surrogate.

\begin{table}[h]
\centering
\caption{Comparison between exact GW barycenter reconstruction and the linear surrogate on PROTEINS.}
\label{tab:linear_surrogate}
  \begin{tabular}{l c c}
    \hline
    Metric & Exact Barycenter & Linear Surrogate \\
    \hline
    Avg. SGW recon. error & 0.018739 & 0.018776 $(+0.000037)$ \\
    Inference time / graph & 1.47 ms & \textbf{0.04 ms} $(36{\times})$ \\
    End-to-end trainable & No & \textbf{Yes} \\
    \hline
  \end{tabular}
\end{table}

The linear surrogate yields almost identical reconstruction fidelity, with only a negligible increase in SGW reconstruction error.
Meanwhile, it reduces inference time from 1.47 ms to 0.04 ms per graph and supports end-to-end training.
Therefore, the surrogate provides a practical and scalable approximation to GW barycentric reconstruction.

\subsection{Sensitivity to Sampling Strategy and Subgraph Size}
\label{app:sampling_sensitivity}

For node-level transfer, each target node is represented by a sampled ego-subgraph.
We study the sensitivity of SCGFM to both the sampling strategy and the sampled subgraph size on the CiteSeer $\rightarrow$ Cora 5-shot transfer setting.
We compare $k$-hop sampling, random walk (RW) sampling, and personalized PageRank (PPR) sampling.

Table~\ref{tab:sampling_size} reports the results under different subgraph sizes.
Performance generally improves when the subgraph size increases from 16 to 64 and then saturates around 64--128 nodes.

\begin{table}[h]
\centering
\caption{Sensitivity of SCGFM to sampling strategy and subgraph size on CiteSeer $\rightarrow$ Cora under the 5-shot setting.}
\label{tab:sampling_size}
  \begin{tabular}{l c c c}
    \hline
    Strategy & Size = 16 & Size = 64 & Size = 128 \\
    \hline
    $k$-hop & $62.97 \pm 3.70$ & $67.06 \pm 3.05$ & $65.93 \pm 2.82$ \\
    RW      & $59.17 \pm 3.98$ & $66.84 \pm 3.87$ & $69.67 \pm 2.87$ \\
    PPR     & $\textbf{64.73} \pm 3.53$ & $\textbf{71.23} \pm 2.73$ & $\textbf{71.39} \pm 2.55$ \\
    \hline
  \end{tabular}
\end{table}

We further compare sampling strategies across different models.
For each model and strategy, Table~\ref{tab:sampling_models} reports the best accuracy over the tested subgraph sizes.

\begin{table}[h]
\centering
\caption{Comparison of sampling strategies across models on CiteSeer $\rightarrow$ Cora. Each entry reports the best accuracy over tested subgraph sizes.}
\label{tab:sampling_models}
  \begin{tabular}{l c c c}
    \hline
    Model & $k$-hop & RW & PPR \\
    \hline
    SCGFM & $67.34@96$ & $69.67@128$ & $\textbf{71.39@128}$ \\
    GCN   & $\textbf{45.53@128}$ & $38.99@128$ & $44.44@128$ \\
    GIN   & $\textbf{45.86@96}$  & $38.73@128$ & $42.87@128$ \\
    \hline
  \end{tabular}
\end{table}

The results suggest two observations.
First, SCGFM benefits from moderately sized subgraphs, with performance saturating around 64--128 nodes.
Second, the preferred sampling strategy is model-dependent.
SCGFM achieves the best performance with PPR sampling, which better preserves globally relevant structural context, while GCN and GIN obtain their best results with $k$-hop neighborhoods.

\section{Proof}
\subsection{Proof of Theorem \ref{the:stability}} \label{proof:w_stability}
\begin{proof}
  Let $G$  and $G'$ be two input graphs with structural distance $d_{GW}(G,G')$.

  \textbf{Stability of Structural Coordinates ($\mathbf{w}$)}

  The coordinates are derived from Softmax-normalized distances to fixed bases.
  It has been established that the Softmax function is Lipschitz continuous with a constant of $L_{sm}$ with respect to the standard Euclidean norm \cite{gao2017properties}.
  Combined with the scaling factor $1/\tau$, we have:
  \begin{align}
    \Vert \mathbf{w} - \mathbf{w}' \Vert_2 
    &= \Vert \operatorname{softmax}(-\mathbf{d}/\tau) - \operatorname{softmax}(-\mathbf{d}'/\tau) \Vert \\ 
    &\leq L_{sm} \cdot \Vert -\mathbf{d}/\tau -(-\mathbf{d}' /\tau)\Vert \\
    &= \frac{L_{sm}}{\tau} \Vert \mathbf{d} - \mathbf{d}' \Vert_2 \label{eq:dd}
  \end{align}

  The vector $\mathbf{d}(G) \in \mathbb{R}^K$ consists of the GW distances from graph $G$ to the set of fixed geometric bases $\mathcal{B} = \{ B_1,B_2, \ldots, B_K \}$, i.e., $d_k = d_{GW}(G,B_k)$.
  
  Since the GW distance $d_{GW}$ satisfies the triangle inequality on the considered graph space, it also satisfies the reverse triangle inequality:
  \begin{equation}
    | d_{GW}(G,B_k) - d_{GW}(G', B_k)| \leq d_{GW}(G,G')
  \end{equation}
  for any basis $B_k$.

  Applying this inequality component-wise to the $l_2$-norm formulation of $\Vert\mathbf{d} - \mathbf{d}' \Vert_2$, we obtain:
  \begin{align}
    \Vert \mathbf{d} - \mathbf{d}' \Vert_2
    &= \sqrt{\sum_{k=1}^K ( d_{GW}(G,B_k) - d_{GW}(G',B_k))^2}\\
    & \leq \sqrt{\sum_{k=1}^K (d_{GW}(G,G'))^2} \\
    &= \sqrt{K(d_{GW}(G,G'))^2}\\
    &= \sqrt{K} \cdot d_{GW}(G,G')
  \end{align}

  Combining this result with Eq.~\ref{eq:dd}, we obtain the final Lipschitz bound for the weight vector $\mathbf{w}$:
  \begin{align}
    \Vert \mathbf{w} - \mathbf{w}' \Vert_2
    & \leq \frac{L_{sm}}{\tau}\Vert \mathbf{d} - \mathbf{d}' \Vert_2\\
    & \leq \frac{\sqrt{K} L_{sm}}{\tau}\cdot d_{GW}(G,G')
  \end{align}

  This confirms that the structural coordinates representation $\mathbf{w}$ is Lipschitz continuous with respect to the input graph structure, with a Lipschitz constant $L_{\mathbf{w}} = \frac{\sqrt{K}L_{sm}}{\tau}$.

\end{proof}


\end{document}